\setlist{nolistsep}
\newcommand{\tensor}[1]{\underline{ \mathbf{#1} }}
\DeclareMathOperator*{\argmin}{argmin}
\newcounter{ALC@tempcntr}% Temporary counter for storage
\newcommand{\hide}[1]{}
\newcommand{\ben}{\begin{enumerate*}}
\newcommand{\een}{\end{enumerate*}}
\newcommand{\bit}{\begin{itemize*}}
\newcommand{\eit}{\end{itemize*}}
\newtheorem{lemma}{Lemma}
\newcommand{\effcor}{\textsc{CorConDia}\xspace}
\newcommand{\getrank}{\textsc{GetRank}\xspace}
\newcommand{\method}{\textsc{SamBaTen}\xspace}
\newcommand{\codeurl}{\url{www.cs.ucr.edu/~epapalex/src/SamBaTen.zip}}
\begin{document}

\author{\IEEEauthorblockN{Ekta Gujral}
\IEEEauthorblockA{Computer Science and Engineering\\
University of California Riverside\\
Email: egujr001@ucr.edu}
\and
\IEEEauthorblockN{Ravdeep Pasricha}
\IEEEauthorblockA{Computer Science and Engineering\\
University of California Riverside\\
Email: rpasr001@ucr.edu}
\and
\IEEEauthorblockN{Evangelos E. Papalexakis}
\IEEEauthorblockA{Computer Science and Engineering\\
University of California Riverside\\
Email: epapalex@cs.ucr.edu}}

\title{SamBaTen: Sampling-based Batch Incremental Tensor Decomposition}
\maketitle
\begin{abstract}
Tensor decompositions are invaluable tools in analyzing multimodal datasets. In many real-world scenarios, such datasets are far from being static, to the contrary they tend to grow over time. For instance, in an online social network setting, as we observe new interactions over time, our dataset gets updated in its ``time'' mode. How can we maintain a valid and accurate tensor decomposition of such a dynamically evolving multimodal dataset, without having to re-compute the entire decomposition after every single update? In this paper we introduce \method, a {\em Sam}pling-based {\em Ba}tch Incremental {\em Ten}sor Decomposition algorithm, which incrementally maintains the decomposition given new updates to the tensor dataset. \method is able to scale to datasets that the state-of-the-art in incremental tensor decomposition is unable to operate on, due to its ability to effectively summarize the existing tensor and the incoming updates, and perform all computations in the reduced summary space. We extensively evaluate \method using synthetic and real datasets. Indicatively, \method achieves comparable accuracy to state-of-the-art incremental and non-incremental techniques, while being {\em 25-30 times faster}. Furthermore, \method scales to very large sparse and dense dynamically evolving tensors of dimensions up to $100K \times 100K \times 100K$ where state-of-the-art incremental approaches were not able to operate.
\end{abstract}

\section{Introduction}
\label{sec:intro}
Tensor decomposition is a very powerful tool for many problems in data mining \cite{kolda2005higher}, machine learning \cite{pmlrv51anandkumar16}, chemometrics \cite{bro1997parafac}, signal processing \cite{sidiropoulos2004low} to name a few areas. The success of tensor decomposition lies in its  capability of finding complex patterns in multi-way settings, by leveraging higher-order structure and correlations within the data.  The dominant tensor decompositions are CP/PARAFAC (henceforth referred to as CP), which extracts interpretable latent factors from the data, and Tucker, which estimates the joint subspaces of the tensor. In this work we focus on the CP decomposition, which has been shown to be extremely effective in exploratory data mining time and time again.
\begin{figure}[!ht]
	\begin{center}
		\includegraphics[width = 0.45\textwidth]{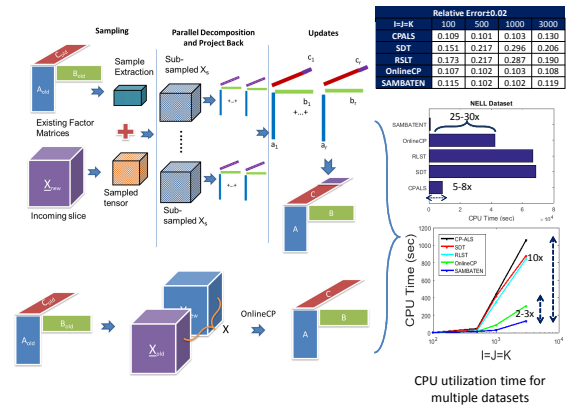}
		\caption{\bf{\method vs state-of-art techniques: Our proposed method \method outperforms state-of-the-art baselines while maintaining competitive accuracy.}}
		\label{fig:crown_jewel}
	\end{center}
	\vspace{-0.1in}
\end{figure}

In a wide variety of modern real-world applications, especially in the age of Big Data, data are far from being static. To the contrary, the data get updated dynamically. In these circumstances, a data tensor needs to be shrunk, expanded or modified on any of its mode. For instance, in an online social network, new interactions occur every second and new friendships are formed at a similar pace. In the tensor realm, we may view a large proportion of these dynamic updates as an introduction of new ``slices'' in the tensor: in the social network example, new interactions that happen as time evolves imply the introduction of new snapshots of the network, which grow the tensor in the ``time'' mode. How can we handle such updates in the data without having to re-compute the decomposition whenever an update arrives, but incrementally update the pre-existing results given the new data?

Computing the decomposition for a dynamically updated tensor is challenging, with the challenges lying, primarily, on two of the three V's in the traditional definition of Big Data: {\em Volume} and {\em Velocity}. As a tensor dataset is updated dynamically, its volume increases to the point that techniques which are not equipped to handle those updates {\em incrementally}, inevitably fail to execute due to the sheer size of the data. Furthermore, even though the applications that tensors have been successful so far  do not require real-time execution per se, the decomposition algorithm must, nevertheless, be able to ingest the updates to the data at a rate that will not result in the computation being ``drowned'' by the incoming updates.

The majority of prior work has focused on the Tucker Decomposition of incrementing tensors \cite{papadimitriou2005streaming,SunITA,fanaee2015multi}, however very limited amount of work has been done on the CP. Nion and Sidiropoulos \cite{nion2009adaptive} proposed  two methods namely Simultaneous Diagonalization Tracking (SDT) and Recursive Least Squares Tracking (RLST) and most recently, \cite{zhou2016accelerating} introduced the OnlineCP decomposition for higher order online tensors. Even though prior work in incremental CP decomposition, by virtue of allowing for incremental updates to the already computed model, is able to deal with {\em Velocity}, when compared to the naive approach of re-executing the entire decomposition on the updated data, every time a new update arrives, it falls short when the {\em Volume} of the data grows.

\begin{mdframed}[linecolor=red!60!black,backgroundcolor=gray!20,linewidth=2pt,topline=false,  rightline=false, leftline=false] 
In this paper we propose a novel large scale incremental CP tensor decomposition that effectively leverages (potential) sparsity of the data, and achieves faster and more scalable performance than state-of-the-art baselines, while maintaining comparable accuracy. 
\end{mdframed}
We show a snapshot of our results in Figure \ref{fig:crown_jewel}: \method is faster than all state-of-the-art methods on data that the baselines were able to operate on. Furthermore, \method was able to scale to, both dense and sparse, dynamically updated tensors, where none of the baselines was able to run. Finally, \method achieves comparable accuracy to  existing incremental and non-incremental methods.
Our contributions are summarized as follows:
\begin{itemize}[noitemsep]
	\item {\bf Novel scalable algorithm}: We introduce  \method, a novel scalable  and  parallel incremental tensor decomposition algorithm for efficiently computing the CP decomposition of incremental tensors. The advantage of the proposed algorithm stems from the fact that it only operates on small summaries of the data at all times, thereby being able to maintain its efficiency regardless of the size of the full data. Furthermore, if the tensor and the updates are sparse, \method leverages the sparsity by summarizing in a way that retains only the useful variation in the data. To the best of our knowledge, this is the first incremental tensor decomposition which effectively leverages sparsity in the data.
	\item {\bf Quality control}: As a tensor is dynamically updated, some of the incoming updates may contain rank-deficient structure, which, if not handled appropriately, can pollute the results. We equip \method with a quality control option, which effectively determines whether an update is rank-deficient, and subsequently handles the update in a way that it does not affect latent factors that are not present in that update.
	\item {\bf Extensive experimental evaluation}: Through experimental evaluation on six real-world datasets with sizes that range up to $70$GB, and synthetic tensors that range up to $100K\times 100K \times 100K$, we show that our method can incrementally maintain very accurate decompositions, faster and in a more scalable fashion than state-of-the-art methods.
\end{itemize}

{\bf Reproducibility}: We make our Matlab implementation publicly vailable at \codeurl. Furthermore, all the datasets we use for evaluation are publicly available.

\section{Problem Formulation}
\label{sec:problem}

\subsection{Preliminary Definitions}

\textbf{Tensor} : A tensor is a higher order generalization of a matrix. In order to avoid overloading the term ``dimension'', we call an $I\times J \times K$ tensor a three ``mode'' tensor, where ``modes'' are the numbers of indices used to index the tensor. The number of modes is also called ``order''. Table \ref{table:t2} contains the symbols used throughout the paper. We refer the interested reader to several surveys that provide more details and a wide variety of tensor applications \cite{kolda2009tensor,papalexakis2016tensors}. In the interest of space, we also refer the reader to \cite{papalexakis2016tensors} for the definitions of Kronecker and Khatri-Rao products which are not essential for following the basic derivation of our approach.

\textbf{Slice} :  A slice is a (m-1)-dimension partition of tensor where an index is varied in one mode and the indices fixed in the other modes. There are three categories of slices : horizontal ($\tensor{X}$(i,:,:)) , lateral ($\tensor{X}$(:,j,:)), and frontal ($\tensor{X}$(:,:,k)) for third-order tensor X as shown in Figure \ref{fig:slice}. 

% %***************************Slices***************************
\begin{figure}[!ht]
	\begin{center}
		\includegraphics[width  = 0.3\textwidth]{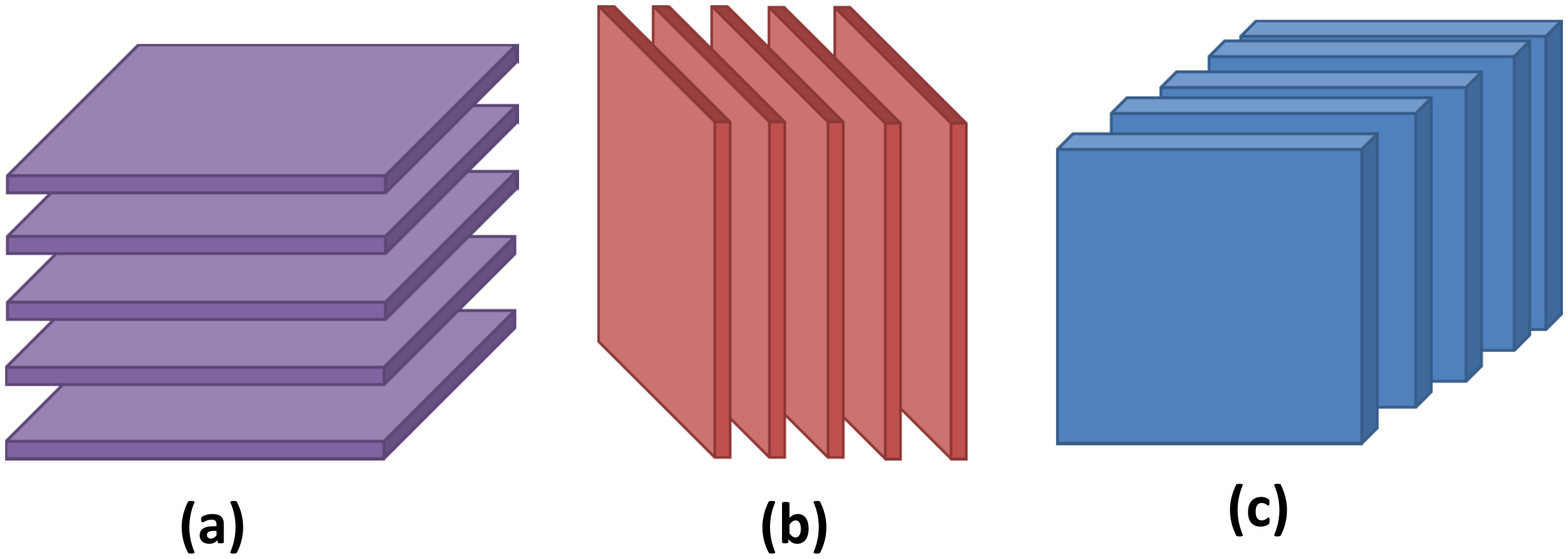}
				\caption{ \bf{Slices of 3-order tensor (a) horizontal $\tensor{X}(i,:,:)$ (b)  lateral $\tensor{X}(:,j,:)$, (c) frontal $\tensor{X}(:,j,:)$.}}
		\label{fig:slice}
	\end{center}
	\vspace{-0.2in}
\end{figure}

\begin{table}[h!]
%\ssmall
\begin{center}
\begin{tabular}{ |c|c| }
\hline
Symbols & Definition \\ 
\hline
\hline
$\tensor{X},\mathbf{X}, \mathbf{x},x$ & Tensor, Matrix, Column vector, Scalar \\ 
\hline
$\mathbb{R}$ & Set of Real Numbers  \\ 
\hline
$\circ$ & Outer product  \\ 
\hline
%vec() & Vectorization operator  \\ 
%\hline
%$[\mathbf{A}; \mathbf{B}]$ & Vertical stacking of $\mathbf{A,B}$  \\ 
%\hline
%$[\mathbf{A}~~~\mathbf{B}]$ & Horizontal stacking of $\mathbf{A,B}$  \\ 
%\hline
$\lVert \mathbf{A} \rVert_F, \| \mathbf{a} \|_2$& Frobenius norm, $\ell_2$ norm \\
\hline 
$\mathbf{x}(I)$ & Spanning the elements of $\mathbf{x}$ in indices $\in I$ \\ 
\hline
$\mathbf{x}(:)$ & Spanning all elements of $\mathbf{x}$\\ 
\hline
$\mathbf{X}(:,r)$ &$ r^{th}$ column of $\mathbf{X}$  \\ 
\hline
$\mathbf{X}(r,:)$ & $ r^{th}$ row of $\mathbf{X}$  \\ 
\hline
 $\otimes $& Kronecker product\\
\hline
 $\odot$ & Khatri-Rao product (column-wise Kronecker product \cite{papalexakis2016tensors})\\
\hline
\end{tabular}
\bigskip
\caption{\bf{Table of symbols and their description}}
\label{table:t2}
\end{center}
\vspace{-0.2in}
\end{table}
\textbf{Canonical Polyadic Decomposition} : One of the most popular and widely used tensor decompositions is the Canonical Polyadic (CP) or CANDECOMP/PARAFAC decomposition \cite{carroll1970analysis,PARAFAC}. We henceforth refer to this decomposition as CP. In CP., the tensor is decomposed into a sum of rank-one tensors, i.e., a sum of outer products of three vectors (for three-mode tensors):
$
\tensor{X} \thickapprox \sum_{r=1}^R \mathbf{A}(:,r) \circ \mathbf{B}(:,r)\circ \mathbf{C}(:,r)
$
where $\mathbf{A}\in  \mathbb{R}^{I\times R}, \mathbf{B} \in \mathbb{R}^{J \times R}, \mathbf{C} \in \mathbb{R}^{K\times R}$, and the outer product is given by $(\mathbf{A}(:,r) \circ \mathbf{B}(:,r)\circ \mathbf{C}(:,r))(i,j,k) = \mathbf{A}(i,r) \mathbf{B}(j,r) \mathbf{C}(k,r)$ for all $i,j,k$. In order to compute the decomposition we typically need to minimize the squared differences (i.e., Frobenius norm) between the original tensor and the model
There exist other modeling approaches in the literature \cite{chi2012tensors} which minimize the KL-Divergence, however, Frobenius norm-based approaches are still to this day the most well studied. We reserve investigation of other loss functions as future work.

\subsection{Problem Definition}
In many real-world applications, data grow dynamically and may do so in many modes. For example, given a dynamic tensor in a location-based recommendation system, as shown in Figure \ref{fig:realLife}(a), structured as location $\times$ hot-spots $\times$ people, the number of registered location, hot-spots , and people visited may all increase over time. Another example is time-evolving social network interactions figure \ref{fig:realLife} (b) as also described in Introduction section. This incremental property of data  gives rise to the need for an on-the-fly update of the existing decomposition, which we name incremental tensor decomposition. Notice that the literature (and thereby this paper) uses the terms ``incremental'', ``dynamic'', and ``online'' interchangeably.  In such scenarios, data updates happen very fast which make traditional (non-incremental) methods to collapse because they need to recompute the entire large scaled data.
% %***************************Fitness Improvement***************************
\begin{figure}[!ht]
	\begin{center}
		\includegraphics[  width  = 0.36\textwidth]{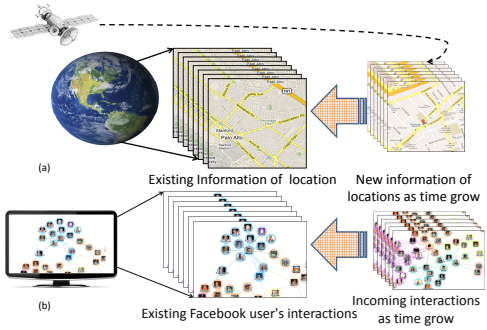}
				\caption{ \bf{Real life dynamic data examples (a) collecting new location update from satellite to GPS recommendations systems (b) growing Facebook interaction between people over the time.}}
		\label{fig:realLife}
	\end{center}
	\vspace{-0.1in}
\end{figure}

This paper address the problem of large scale incremental tensor decomposition. Without loss of generality , we focus on a 3-mode tensor one of whose modes is growing with time. However, the problem definition (and our proposed method) extends to any number of modes. Let us consider $\tensor{X}(t)$=$\mathbb{R}^{I\times J \times K_1(t)}$ at time $t$. The CP decomposition of $\tensor{X}(t)$ is given as :
\[
\begin{array}{c}
\tensor{X}^{(1)}(t)\approx (\mathbf{A}(t) \odot \mathbf{B}(t))\mathbf{C}^{T}(t) \approx \mathbf{L}(t)\mathbf{C}^{T}(t)
\end{array}
\]
where $\mathbf{L}(t)=(\mathbf{A}(t) \odot \mathbf{B}(t))$ of dimension $IJ\times R$ and $\mathbf{C}^{T}(t)$ is of dimension $K_{1}\times R$. When new incoming slice $\tensor{X}(t^{'})$=$\mathbb{R}^{I\times J \times K_2(t^{'})}$ is added in mode 3, required decomposition at time $t^{'}$ is  :
\[
\begin{array}{c}
\tensor{X}^{(1)}(t+t^{'}) \approx \mathbf{L}(t+t^{'})\mathbf{C}^{T}(t+t^{'})
\end{array}
\]
where $\mathbf{L}(t+t^{'})=(\mathbf{A}(t+t^{'}) \odot \mathbf{B}(t+t^{'}))$ of dimension $IJ\times R$ and $\mathbf{C}^{T}(t+t^{'})$ is of dimension $(K_{1}+K_{2})\times R$.\\

The problem that we solve is the following:

\begin{mdframed}[linecolor=red!60!black,backgroundcolor=gray!20,linewidth=2pt,    topline=false,rightline=false, leftline=false] 
\begin{problem}
{\bf Given} (a) pre-existing set of decomposition results $\mathbf{A}(t),\mathbf{B}(t)$ and $\mathbf{C}(t)$ of $R$ components, which approximate tensor $\tensor{X}_{old}$ of size $ I \times J \times K_1$ at time \textit{t} , (b) new incoming slice in form of tensor $\tensor{X}_{new}$ of size $ I \times J \times K_2$ at any time $t^{'}$, find updates of $\mathbf{A}(t^{'}),\mathbf{B}(t^{'})$ and $\mathbf{C}(t^{'})$ {\bf incrementally} to approximate tensor $\tensor{X}$ of dimension $ I \times J \times K$, where K =$ K_1 + K_2$ after appending new slice or tensor to $3^{rd}$ dimension while maintaining a comparable accuracy with running the full CP decompositon on the entire updated tensor $\tensor{X}$.
\end{problem}
\end{mdframed}

To simplify notation, we will interchangeably refer to $\mathbf{A}(t)$ as $\mathbf{A}_{old}$ (when we need to refer to specific indices of that matrix), and similarly for $\mathbf{A}(t')$ we shall refer to it as $\mathbf{A}^{'}$.

\section{Proposed Method: \method}
\label{sec:method}
As we mention in the introduction, there exists a body of work in the literature that is able to efficiently and incrementally update the CP decomposition in the presence of incoming tensor slices \cite{nion2009adaptive,zhou2016accelerating}. However, those methods fall short when the size of of the dynamically growing tensor increases, and eventually are not able to scale to very large dynamic tensors. The reason why this happens is because these methods operate on the {\em full data}, and thus, even though they incrementally update the decomposition (avoiding to re-compute it from scratch), inevitably, as the size of the full data grows, it takes a toll on the run-time and scalability.

In this paper we propose \method, which takes a different view of the solution, where instead of operating on the full data, it operates on a summary of the data. Suppose that the ``complete'' tensor (i.e., the one that we will eventually get when we finish receiving updates) is denoted by $\tensor{X}$. Any given incoming slice (or even a batch of slice updates) can be, thus, seen as a sample of that tensor, $\tensor{X}$ where the sampled indices in the third mode (which we assume is the one receiving the updates) are the indices of the incoming slice(s).  Suppose, further, that given a set of sample tensors (which are drawn by randomly selecting indices from all the modes of the tensor) we can approximate the original tensor with high-accuracy (which, in fact, the literature has shown that it is possible \cite{papalexakis2012parcube,erdos2013walk}). Therefore, when we receive a new set of slices as an update, if we update those samples with the new indices, then we should be able to compute a decomposition very efficiently which incorporates the slice updates, and approximates the updated tensor well. This line of reasoning inspires \method, a visual summary of which is shown in Figure \ref{fig:method}.

\begin{figure}[!ht]
\begin{center}
		\includegraphics[clip, trim=1.2cm 8.7cm 1.5cm 0.5cm,width  = 0.45\textwidth]{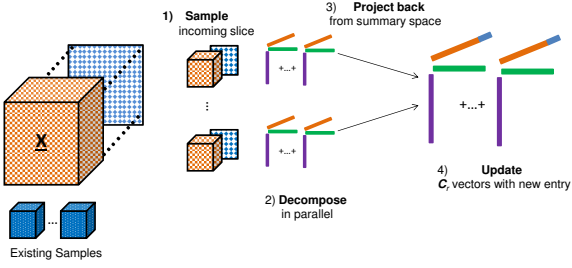}
		\caption{\bf{\method: Sampling-based Batch Incremental Tensor Decomposition: 1) Sample incoming tensor into sub-tensors,  2) run parallel decompositions on the samples, 3) project back the results into the original space, and, finally, 4) update the incrementally growing factor matrix $\mathbf{C}$.}}
		\label{fig:method}
	\end{center}
	\vspace{-0.1in}
\end{figure}

\subsection{The heart of \method}
The algorithmic framework we propose is shown in Figure \ref{fig:method} and is described below:
We assume that we have a pre-existing set of decomposition results before the update, as well as a set of {\em summaries} of the tensor before the update. Summaries are in the form of sampled sub-tensors, as described in the text below. For simplicity of description, we assume that we are receiving updated slices on the third mode, which in turn have to add new rows to the $\mathbf{C}$ matrix (that correspond to the third mode). We, further, assume that the updates come in batches of new slices, which, in turn, ensures that we see a mature-enough update to the tensor, which contains useful structure. Trivially, however, \method can operate on singleton batches.

In the following lines, $\tensor{X}$ is the tensor prior to the update and $\tensor{X}_{new}$ is the batch of incoming slices. Given an incoming batch, \method performs the following steps:

\textbf{Sample}: The rationale behind \method is that each batch $\tensor{X}_{new}$ can be seen as a sample of third-mode indices of (what is going to be) the full tensor. In this step, we are going to merge these incoming indices with an already existing set of sampled tensors. In order to obtain those pre-existing samples, we follow a similar approach to \cite{papalexakis2012parcube}. Namely, we sample indices from the tensor $\tensor{X}$ based on a measure of importance.
To determine the importance for each mode $m$ and then sample the indices using this measure as a sampling weight divided by its probability. An appropriate measure of importance (MoI) is the \textbf{sum-of-squares} of the tensor for each mode.  For the first mode , MoI is defined as:
\begin{equation}\label{eq:1}
x_a(i)=\sum_{j=1}^J \sum_{k=1}^K \tensor{X}(i,j,k)^2  
\end{equation} 
for  $i \in$ (1,I). Similarly, we can define the MoI for modes 2 and 3.
\hide{
\begin{equation}\label{eq:2}
x_b(j)=\sum_{i=1}^I \sum_{k=1}^K \tensor{X}(i,j,k)^2, x_c(k)=\sum_{i=1}^I \sum_{j=1}^J \tensor{X}(i,j,k)^2  
\end{equation}
for  $j \in$ (1, J) and $k \in$  (1,K).
}

We sample each mode of $\tensor{X}$ without replacement, using Eq. \ref{eq:1} to bias the sampling probabilities. With $s$ as sampling factor, i.e. if $\tensor{X}$ has size $I \times J \times K$, then $\tensor{X}_s$ will be of size $\frac{I}{s},\frac{J}{s},\frac{K}{s}$. Sampling rate for each mode is independent from each other, and in fact, different rates can be used for imbalanced modes.

In the case of sparse tensors, the sample will focus on the dense regions of the tensor which contains most of the useful structure. In the case of dense tensors, the sample will give priority to the regions of the tensor with the highest variation.

After forming the sample summary $\tensor{X}_s$ for $\tensor{X}$, we merge it with the samples obtained from the intersection of the third-mode indices of $\tensor{X}_{new}$ and the already sampled indices in the remaining modes, so that the final sample is equal to
$\tensor{X}_s$=$\tensor{X}(I_s,J_s,K_s \cup [K+1 \cdots K_{new}])$, where $K+1 \cdots K_{new}$ are the third-mode indices of $\tensor{X}_{new}$.

Due to the randomized nature of this summarization, we need to draw multiple samples, in order to obtain a reliable set of summaries. Each such independent sample is denoted as $\tensor{X}_s^{(r)}$. In the case of dense tensors, obtaining multiple, independent random samples helps summarize as much of the useful variation as possible. In fact, we will see in the experimental evaluation that increasing the number of samples, especially for dense tensors, improves accuracy. 

In \cite{papalexakis2012parcube} the authors note that in order for their method to work, a set of anchor indices must be common between all samples, so that, later on, we can establish the correct correspondence of latent factors. However, in \method we do not have to actively fix a set of indices across sampling repetitions. When we sample $I_s,J_s,K_s$ each time, those indices correspond to a portion of the decomposition that is already computed. Therefore, the entire set of indices $I_s,J_s,K_s$ can serve as the set of anchors. This is a major advantage compared to \cite{papalexakis2012parcube}, since \method 1) does not need to commit to a set of fixed indices for all samples a-priori, which, due to randomization may happen to represent a badly structured portion of the tensor, 2) does not need to be restricted in a ``small'' set of fixed common indices (which is required in \cite{papalexakis2012parcube} in order to ensure that sufficiently enough new indices are sampled across repetitions), but to the contrary, is able to use a larger number of anchor indices to establish correspondence more reliably, and 3) does not require any synchronization between different sampling repetitions, which results in higher parallelism potential.

\begin{center}
\begin{algorithm} [!htp]
\small
\caption{\method} \label{alg:proposed}
\begin{algorithmic}[1]
\REQUIRE Tensor $\tensor{X}_{new}$ of size $I \times J \times K_{new}$, Factor matrices $\mathbf{A}_{old},\mathbf{B}_{old}, \mathbf{C}_{old}$ of size $I \times R$, $J \times R$ and $K_{old} \times R$ respectively, sampling factor $s$ and number of repetitions $r$.
\ENSURE Factor matrices $\mathbf{A}, \mathbf{B}, \mathbf{C}$ of size $I \times R$, $J \times R$ and $(K_{new}+K_{old}) \times R$, $\lambda$.

\FOR{$i = 1$ \TO $r$} 
\STATE Compute $\mathbf{x}_a, \mathbf{x}_b$ and $\mathbf{x}_c$.
\STATE Sample a set of indices $I_s,J_s,K_s$ from $\tensor{X}$ without replacement using $\mathbf{x}_a(i)/\sum\limits_{i=1}^I x_a(i) $ as probability (accordingly for the rest).
\STATE  $\tensor{X}_s$=$\tensor{X}(I_s,J_s,K_s \cup [K+1 \cdots K_{new}])$
\STATE $[\mathbf{A}_i^{'}, \mathbf{B}_i^{'}, \mathbf{C}_i^{'}] = $ CP$\left( \tensor{X}_s, R \right)$.
\STATE Normalize $\mathbf{A}_i^{'}, \mathbf{B}_i^{'}, \mathbf{C}_i^{'}$ (as shown in the text) and absorb scaling in $\boldsymbol{\lambda}$.
%\STATE Compute $\lambda_{old}$ as average of $\lambda_i$
\STATE Compute optimal matching between the columns of $\mathbf{A}_{old}, \mathbf{B}_{old}, \mathbf{C}_{old}$ and $\mathbf{A}_i^{'}, \mathbf{B}_i^{'}, \mathbf{C}_i^{'}$ as discussed in the text
%\STATE Compute similarity $v_a(f)=A_i^{'}(I_s,f_2)^TA^{'}(I_s,f_1)$,$v_b(f)=B_i^{'}(J_s,f_2)^TB^{'}(J_s,f_1)$,$v_c(f)=C_i^{'}(K_s,f_2)^TC^{'}(I_k,f_1)$ to obtain a=$\argmax_a v(a')$,b=$\argmax_b v(b')$ and c=$\argmax_c v(c')$
\STATE Update only zero entries of $\mathbf{A}, \mathbf{B}, \mathbf{C}$ that correspond to sampled entries of $\mathbf{A}_i^{'}, \mathbf{B}_i^{'}, \mathbf{C}_i^{'}$

\STATE Obtain $\mathbf{C}_{new}$ of size $K_{new} \times R$ by taking the last $K_{new}$ rows of $\mathbf{C}^{'}$. 
\STATE Use a shared copy of $\mathbf{C}_{new}$ and average out its entries in a column-wise fashion across different sampling repetitions. 
 \ENDFOR
 \STATE Update $\mathbf{C}$ of size $(K_{new}+K_{old}) \times R$ as :
$$C =\begin{bmatrix}
           \mathbf{C}_{old} \\
          \mathbf{C}_{new}\\
         \end{bmatrix}$$
       
       \STATE Update scaling $\boldsymbol{\lambda}$ as the average of the previous and new value.
\RETURN A,B,C,$\lambda$
\end{algorithmic}
\end{algorithm}
\end{center}
\textbf{Decompose}: Having obtained $\tensor{X}_s$, from the previous step, \method decomposes the summary using any state-of-the-art algorithm, obtaining factor matrices $[\mathbf{A}_i^{'}, \mathbf{B}_i^{'}, \mathbf{C}_i^{'}]$. For the purposes of this paper, we use the Alternating Least Squares (ALS) algorithm for the CP decomposition, which is probably the most well studied algorithm for CP. The rank used here is equal to the universal rank $R$, however, in Section \ref{sec:getrank} we will discuss whether this choice is always the most appropriate and what alternatives there are.

\textbf{Project back}: The CP decomposition is unique (under mild conditions) up to permutation and scaling of the components \cite{ten2002uniqueness}. This means that, even though the existing decomposition $[\mathbf{A}_{old},\mathbf{B}_{old}, \mathbf{C}_{old}]$ may have established an order of the components, the decomposition $[\mathbf{A}_i^{'}, \mathbf{B}_i^{'}, \mathbf{C}_i^{'}]$ we obtained in the previous step is very likely to introduce a different ordering and scaling, as a result of the aforementioned permutation and scaling ambiguity. Formally, the sampled portion of the existing factors and the currently computed factors are connected through the following relation:
\begin{align*}
[\mathbf{A}_{old}(I_s,:),\mathbf{B}_{old}(J_s,:), \mathbf{C}_{old}(K_s,:)] =\\
 [\mathbf{A}_i^{'}(I_s,:) \boldsymbol{\Lambda \Pi}, \mathbf{B}_i^{'}(J_s,:) \boldsymbol{\Pi}, \mathbf{C}(K_s,:)_i^{'} \boldsymbol{\Pi}]
\end{align*}
where $\boldsymbol{\Lambda}$ is a diagonal scaling matrix (which without loss of generality we absorb on the first factor), and $\boldsymbol{\Pi}$ is a permutation matrix that permutes the order of the components (columns of the factors).

In order to tackle the scaling ambiguity, we need to normalize the results in a consistent manner. In particular, we normalize such that each column of the newly computed factors which spans the indices that are shared with $[\mathbf{A}_{old},\mathbf{B}_{old}, \mathbf{C}_{old}]$ has unit norm:
$\mathbf{A}_i^{'}(:,f)=\frac{\mathbf{A}_i^{'}}{||\mathbf{A}_i^{'}(I_s,f)||_2},$
and accordingly for the remaining factors. Note that for $\mathbf{A}_i^{'}$,  trivially $\mathbf{A}_i^{'}(I_s,f) = \mathbf{A}_i^{'}(:,f)$ and similarly for $\mathbf{B}_i^{'}$. After normalizing, the relation between the existing factors and the currently computed is $\mathbf{A}_{old}(I_s,:) = \mathbf{A}_i^{'} \boldsymbol{\Pi}$ (and similarly for the remaining factors). Each iteration retains a copy of $[\mathbf{A}_{old}(I_s,:),\mathbf{B}_{old}(J_s,:), \mathbf{C}_{old}(K_s,:)]$ which will serve the anchor for disambiguating the permutation of components. We normalize $[\mathbf{A}_{old}(I_s,:),\mathbf{B}_{old}(J_s,:), \mathbf{C}_{old}(K_s,:)]$ to unit norm as well, and the reason behind that lies in the following Lemma:

\begin{lemma}
Consider $\mathbf{a} = \mathbf{A}_i^{'}(:,f_1)$ and $\mathbf{b} = \mathbf{A}_{old}(:,f_2)$. If $f_1$ and $f_2$ correspond to the same latent CP factor, in the noiseless case, then $\mathbf{a}^T \mathbf{b} = 1$ otherwise  $\mathbf{a}^T \mathbf{b} < 1$.
\end{lemma}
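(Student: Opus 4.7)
The plan is to combine the Cauchy--Schwarz inequality with the essential uniqueness of the CP decomposition (up to permutation and scaling) in the noiseless regime. First I would note that after the normalization step described in the paragraph preceding the lemma, the vector $\mathbf{a}=\mathbf{A}_i'(:,f_1)$ has unit $\ell_2$ norm, and $\mathbf{A}_{old}(I_s,f_2)$---which is what $\mathbf{b}$ must mean for the inner product to be well defined dimensionally---also has unit norm by the same prescription. Cauchy--Schwarz will then give $\mathbf{a}^T\mathbf{b}\le\|\mathbf{a}\|_2\|\mathbf{b}\|_2=1$, with equality iff $\mathbf{a}=\mathbf{b}$, so the work reduces to examining when that collinearity does and does not hold.

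Next, for the case where $f_1$ and $f_2$ index the same latent factor, I would invoke essential uniqueness of CP directly. In the noiseless regime, the decomposition obtained in the \emph{Decompose} step on $\tensor{X}_s$ is exact, and under a Kruskal-type condition on $(\mathbf{A},\mathbf{B},\mathbf{C})$ the triple $(\mathbf{A}_i',\mathbf{B}_i',\mathbf{C}_i')$ equals the restriction $(\mathbf{A}_{old}(I_s,:),\mathbf{B}_{old}(J_s,:),\mathbf{C}_{old}(K_s,:))$ up to a diagonal scaling $\boldsymbol{\Lambda}$ and a column permutation $\boldsymbol{\Pi}$---exactly the relationship already written out in the text. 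The unit-norm normalization cancels $\boldsymbol{\Lambda}$, and matching columns via $\boldsymbol{\Pi}$ identifies $\mathbf{a}$ with $\mathbf{b}$, yielding $\mathbf{a}^T\mathbf{b}=1$.

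The case of distinct factors is where the main obstacle lies. By the same uniqueness argument, $\mathbf{a}$ and $\mathbf{b}$ are two distinct (after rescaling) columns of the same underlying factor matrix; the delicate point is to rule out the degenerate possibility that two such columns happen to be parallel, since only then is Cauchy--Schwarz strict. My plan is to close this gap by appealing to the uniqueness hypothesis itself: if two columns of $\mathbf{A}$ were collinear, the Kruskal rank of $\mathbf{A}$ would drop, and under the standard condition $k_{\mathbf{A}}+k_{\mathbf{B}}+k_{\mathbf{C}}\ge 2R+2$ that underwrites CP uniqueness, the decomposition could not have been identified exactly to begin with---contradicting the noiseless recovery assumption. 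Hence $\mathbf{a}$ and $\mathbf{b}$ are not collinear, $\mathbf{a}^T\mathbf{b}<1$ follows from strict Cauchy--Schwarz, and the signed (rather than absolute) equality on the matched side is preserved because dividing by a positive norm cannot flip the orientation of the component identified by $\boldsymbol{\Pi}$.
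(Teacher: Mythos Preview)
Your proposal is correct and follows the same core idea as the paper: apply Cauchy--Schwarz to the unit-normalized columns to get $\mathbf{a}^T\mathbf{b}\le 1$, with equality precisely when $\mathbf{a}=\mathbf{b}$, and then argue that $\mathbf{a}=\mathbf{b}$ holds exactly when $f_1$ and $f_2$ index the same latent factor. The paper's own proof is considerably terser than yours---it states Cauchy--Schwarz, notes the equality case, and stops---so your additional appeal to CP essential uniqueness (via a Kruskal-type condition) to justify both that matched columns coincide after normalization and that unmatched columns cannot be collinear actually fills in steps the paper leaves implicit, particularly the strict-inequality direction, which the paper does not argue at all.
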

\begin{proof}
From Cauchy-Schwartz inequality  $\mathbf{a}^T \mathbf{b} \leq \|\mathbf{a}\|_2 \|\mathbf{b}\|_2$. The above inequality is maximized when $\mathbf{a} = \mathbf{b}$ and for unit norm $\mathbf{a}, \mathbf{b}$, $\mathbf{a}^T \mathbf{b} \leq 1$. Therefore, if $\mathbf{a} = \mathbf{b}$, which happens when $f_1$ and $f_2$ correspond to the same latent CP factor, $\mathbf{a}^T \mathbf{b} =  1$
\end{proof}

Given the above Lemma, we have a guide for identifying the permutation matrix $\boldsymbol{\Pi}$: For every column of $\mathbf{A}_i^{'}$ we compute the inner product with every column of $\mathbf{A}_{old}(I_s,:)$ and compute a matching when the inner product is equal (or close) to 1. Given a large-enough number of rows for  $\mathbf{A}_i^{'}$ (which is usually the case, since we require a large-enough sample of the tensor in order to augment it with the update and compute the factor updates accurately), this matching can be computed more reliably than past approaches that use a related means of permutation disambiguation \cite{papalexakis2012parcube} but rely on a very small number of shared rows which results in sub-optimal results in the presence of noise.

\textbf{Update results}: 

After appropriately permuting the columns of $\mathbf{A}_i^{'}, \mathbf{B}_i^{'}, \mathbf{C}_i^{'}$, we have all the information needed to update our model. Returning to the problem definition of Section \ref{sec:problem}, $\mathbf{A}_i^{'}$ contains the updates to the rows within $I_s$ for $\mathbf{A}(t)$ (and similarly for $\mathbf{B}$ and $\mathbf{C}$). Even though $\mathbf{A, B}$ do not increase their number of rows over time, the incoming slices may contribute valuable new estimates to the already estimated factors. Thus, for the already existing portions of $\mathbf{A, B, C}$ we only update the zero entries that fall within the range of $I_s, J_s$, and $K_s$ respectively. Finally, $ \mathbf{C}_i^{'}([K+1 \cdots K_{new}],:)$ contains the factors for the newly arrived slices, which need to be merged to the already existing columns of $\mathbf{C}$. Since we have properly permuted the columns of $\mathbf{C}_i^{'}$, we accumulate the lower portion of the  $\mathbf{C}_i^{'}$ (which corresponds to the newly added slices) into $\mathbf{C}_{new}$ and we take the column-wise average of the rows to-be-appended to $\mathbf{C}$, across repetitions. Finally, we update
$
   \mathbf{C}(t') =\begin{bmatrix}
             \mathbf{C}_{old} \\
          \mathbf{C}_{new}\\
         \end{bmatrix}.
         $

\subsection{Dealing with rank deficient updates}
\label{sec:getrank}
So far, the above algorithm description is based on the assumption that each one of the sampled tensors $\tensor{X}_s$ we obtain are full-rank, and the CP decomposition of that tensor is identifiable (assumption that is also central to previous works \cite{papalexakis2012parcube}). However, this assumption glosses over the fact that in reality, updates to our tensor may be rank-deficient. In other words, even though $\mathbf{A}(t),\mathbf{B}(t),\mathbf{C}(t)$ have $R$ components, the update may contain $R_{new}$ components, where $R_{new} < R$. If that happens, then the matching as described above is going to fail, and this inevitably leads to very low quality results. Here, we tackle this issue by adding an extra layerincludegraphics of quality control when we compute the CP decomposition, right before line 5 of Algorithm \ref{alg:proposed}: we estimate the number of components $R_{new}$ in $\tensor{X}_s$ and instead of the ``universal'' rank $R$, which may result in low quality factors, we use $R_{new}$ and we accordingly match only those $R_{new}$ to their most likely matches within the existing $R$ components.  Estimating the number of components is a very hard problem, however, there exist efficient heuristics in the literature, such as the Core Consistency Diagnostic (\effcor) \cite{bro2003new} which gives a quality rating for a computed CP decomposition. By successively trying different candidate ranks for $\tensor{X}_s$, we estimate its actual rank, as shown in Algorithm  \ref{alg:getRank} (\getrank), and use that instead. For efficiency we use a recent implementation of \effcor that is especially tailored for exploiting sparsity \cite{papalexakis2015fast}. In the experimental evaluation we demonstrate that using \getrank indeed results in higher-quality latent factors
\hide{
\begin{figure}[!ht]
	\begin{center}
		\includegraphics[clip, trim=1.2cm 6cm 1.2cm 6cm, width  = 0.46\textwidth]{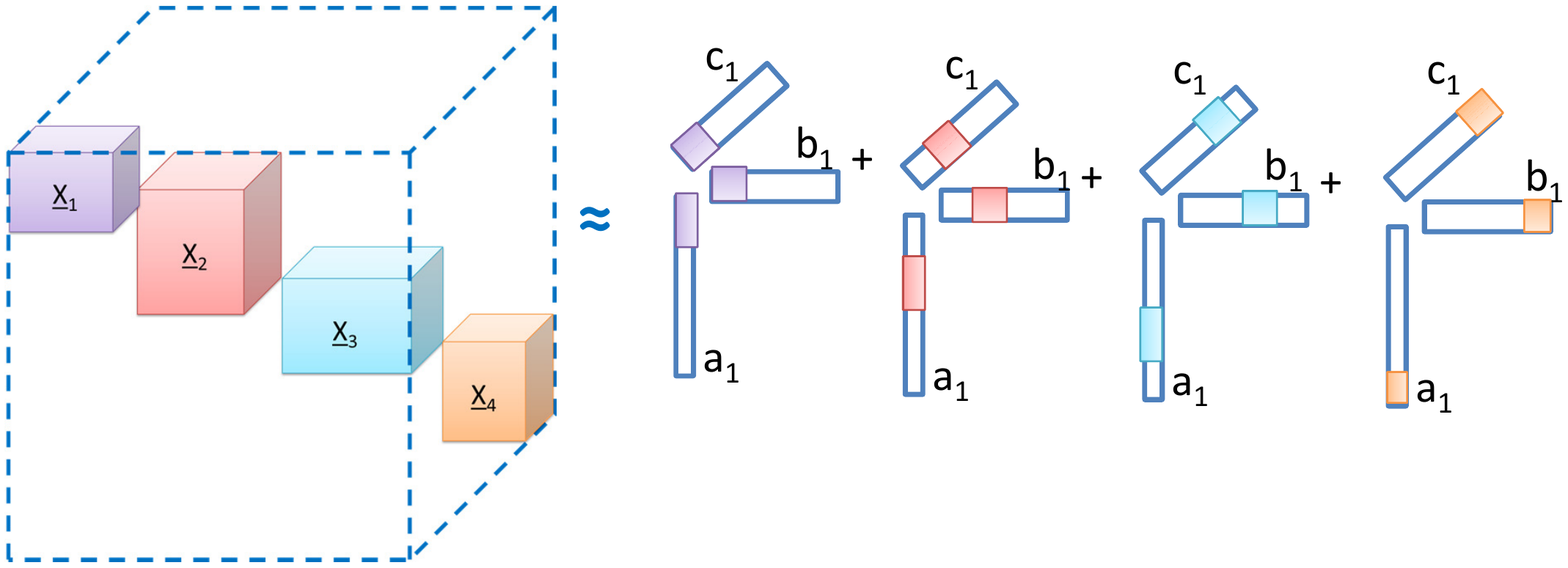}
		\caption{\bf{Tensor with four rank-deficient sub-tensors.}}
		\label{fig:rankDif}
	\end{center}
	\vspace{-0.1in}
\end{figure}
}

\begin{center}
\begin{algorithm} [!htp]
\small
\caption{\getrank} \label{alg:getRank}
\begin{algorithmic}[1]
\REQUIRE Tensor $\tensor{X}$, maximum rank $R$ , maximum no. of iterations $it$.
\ENSURE $R_{new}$
\FOR{1 \TO $R$ }
\FOR{$j=1$ \TO $it$ } 
 \STATE {Run CP Decomposition on $\tensor{X}$ with rank $i$ and obtain $f_i$} 
 \STATE {Run \effcor($\tensor{X}_s$,$f_i$) and obtain $p(i,j)$} 
 \ENDFOR
  \ENDFOR
\STATE sort $p$ and get top 1 index $idx_1$.
\RETURN $R_{new}$ =$idx_1$
\end{algorithmic}
\end{algorithm}
\end{center}

\hide{
\textbf{Summary}: For any third-order tensor that increments with time, we propose an efficient and faster algorithm for incrementally decomposing the sparse as well as dense tensor. We name this algorithm as \method, consist of the following stages:
\begin{itemize}
\item  Augmented new incoming slice with few sampled indices of previous data to obtain $\tensor{X}_{new}^{`}$  .
\item \textbf{Sampling} : Using from $\tensor{X}_{new}^{`}$ extract sub-sample $\tensor{X}_{s}$   and common part $I_s,J_s$ and $K_s$ using \textit{Algorithm 1}.
\item \method : 
\begin{itemize}
\item Compute the \getrank method to find the rank of incoming slice $\tensor{X}_{new}^{`}$ using \textit{Algorithm 2}. 
\item  Run parallel decomposition on  $\tensor{X}_{s}$ using sampling factor \textit{s} using Algorithm 3. 
\item Update non zero entries of the non temporal mode $\mathbf{A_{old}}$  and $\mathbf{B_{old}}$ using $\mathbf{A_{new}}$  and $\mathbf{B_{new}}$ as resultant of \textit{Algorithm 1}. Update temporal mode $\mathbf{C}$ as :
 $\begin{bmatrix}
             \mathbf{C}_{old} \\
          \mathbf{C}_{new} \\
         \end{bmatrix}$
         \end{itemize}
\end{itemize}
}

\section{Experimental Evaluation}
\label{sec:experiments}

In this section we extensively evaluate the performance of \method on multiple synthetic and real datasets, and compare its performance with state-of-the-art approaches. We experiment on the different parameters of \method and the baselines, and how that affects performance. We implemented \method in Matlab using the functionality of the Tensor Toolbox for Matlab \cite{bader2015matlab}  which supports very efficient computations for sparse tensors. Our implementation is available at \codeurl. We used Intel(R) Xeon(R), CPU E5-2680 v3 @ 2.50GHz machine with 48 CPU cores and 378GB RAM. 
% % %***************************Data-set description***************************
\subsection{Data-set description}

\subsubsection{Synthetic data generation}
\label{sec:syntDataDes}
In order to fully explore the performance of \method, in our experiments we generate synthetic tensor with varying density. We generate random tensors of dimension $I=J=K$ where I $\in$ [100, 500, 1000, 3000, 5000, 10000, 50000, 100000]. Those tensors are created from a known set of randomly generated factors, so that we have full control over the ground truth of the full decomposition. We dynamically calculate the size of batch or slice for our all experiments to fit the data into memory. For example for a $1000 \times 1000 \times 1000$ tensor, we selected batch size of 150 and for $5000\times 5000 \times$ 5000 tensor, we selected batch size of 100. The specifications of each synthetic dataset are given in Table \ref{table:tsyndataset}.
\begin{table}[h!]
\small
\begin{center}
\begin{tabular}{ |c|c|c|c|c| }
\hline
Dimension  & Density- & Density-& Batch & Sampling   \\
($I=J=K$)& dense & sparse& size &factor\\
\hline
\hline
 100  &100\%&65\%&50 &2\\
\hline
 500  &100\%&65\%&150&2\\
\hline
 1000  &100\%&55\%&150&2\\
\hline
 3000  &100\%&55\%&100&5\\
 \hline
 5000  &100\%&55\%&100&5\\
\hline
 10000 &100\%&55\%&10&2\\
  \hline
 50000  &100\%&35\%&5&2\\
\hline
 100000  &100\%&35\%&5&2\\
\hline
\end{tabular}
\bigskip
\caption{Table of Datasets analyzed}
\label{table:tsyndataset}
\end{center}
\vspace{-0.1in}
\end{table}
 
\subsubsection{Real Data Description}
\label{sec:realDataDec}
In order to truly evaluate the effectiveness of \method, we test its performance against six real datasets that have been used in the literature. Those datasets  are summarized in Table \ref{table:tdataset} and are publicly available at \url{http://frostt.io/tensors} \cite{frosttdataset}.

\hide{
NIPS, NELL, Facebook-Wall, Facebook-links, Patents, and Amazon dataset. These datasets are publicly available at \url{http://frostt.io/tensors}. \noindent{\textbf{NIPS}} publication dataset is collected by Globerson et al.\cite{chechik2007eec} and consists of papers published from 1987 to 2003 in NIPS. The modes are ``paper'', ``author'' and ``word relation''.
\noindent{\textbf{NELL}} \cite{carlson2010toward} is collected as part of the Read the Web project at Carnegie Mellon University. It is an entity-relation-entity tuple snapshot of the Never Ending Language Learner knowledge base. 
\noindent{\textbf{Facebook Wall posts}} dataset was first used in \cite{viswanath2009evolution} and has modes ``Wall owner'', ``Poster'', and
``day'', where the Poster created a post on the Wall owner’s Wall on the specified timestamp. 
\noindent{\textbf{Facebook Links posts}} \cite{viswanath2009evolution} contains a list of all of the user-to-user links from the Facebook New Orleans sub-network. 
\noindent{\textbf{Patents}} dataset is pairwise co-occurence of terms within window of 7 words in the US utility patents on a year basis. The modes of the tensor represents year-term-term, and the values are $\log(1+f_{i,j})$ , where $f_{i,j}$ is the frequency in which the words $i$ and $j$ appeared in the window.  Each slice of the tensor is symmetric.
Finally, \noindent{\textbf{Amazon}} dataset is collected by SNAP \cite{mcauley2013} and consists of product review from Amazon where mode are in form of user-product-word.
}

\begin{table*}[h!]
\small
\begin{center}
\begin{tabular}{ |c||c|c|c|c|c|c|c| }
\hline
Name&	Description &	Dimensions&	NNZ &  Batch & Sampling & Dataset   \\
&	 &	&	 &  size & factor & File Size  \\
\hline
\hline
NIPS \cite{chechik2007eec}&(Paper,Author,Word)	&2,482 x 2862 x 14036	&3,101,609&500&10&57MB\\

NELL	 \cite{carlson2010toward}&(Entity,Relation,Entity)	 &12092 x 9184 x 28818&76,879,419&500&10&1.4GB\\

Facebook-wall \cite{viswanath2009evolution}&(Wall owner, Poster, day)	&62,891 x 62,891 x 1,070&78,067,090&100&5&2.1GB\\

Facebook-links \cite{viswanath2009evolution}& (User, Links, Day)	&62,891	x  62,891 x	650	&263,544,295&50&2&3.8GB\\

Patents	& (Term ,Term, Year) & 239,172 x 239,172 x 46 &3,596,640,708&10&2&73GB\\

Amazon\cite{mcauley2013}	& (User, Product ,Word) &4,821,207 x	1,774,269 x 1,805,187&	1,741,809,018&50000&20&43GB\\
\hline
\end{tabular}
\bigskip
\caption{Real datasets analyzed}
\label{table:tdataset}
\end{center}
\vspace{-0.1in}
\end{table*}
% % %***************************Evaluation Measures***************************
\subsection{Evaluation Measures}
\label{sec:EvaMeas}
In order to obtain an accurate picture of the performance, we evaluate \method and the baselines using three criteria: Relative Error, Wall-Clock time and Fitness. These measures provide a quantitative way to compare the performance of our method.
More Specifically, \textbf{Relative Error} is effectiveness measurement and defined as : 
\[
Relative Error=\frac{||\tensor{X}_{original}-\tensor{X}_{predicted}||}{||\tensor{X}_{original}||}
\]
where, the lower the value, the better the approximation.

\noindent{\textbf{CPU time (sec)}} indicates how much faster does the decomposition runs as compared to re-running the entire decomposition algorithm whenever we receive a new update on the existing tensor. The average running time denoted by $T_{tot}$ for processing all slices for given tensor, measured in seconds, and is used to validate the time efficiency of an algorithm.

\noindent{\textbf{Relative Fitness}}: tracking the decomposition instead of recomputing it is inevitably an approximate task, so we would like to minimize the discrepancy of the incremental algorithm’s result from the one that has to re-compute the decomposition for every update. Relative Fitness is defined as: 
\[
Relative Fitness=\frac{||\tensor{X}_{original}-\tensor{X}_{\method}||}{||\tensor{X}_{original}-\tensor{X}_{BaseLine}||}
\]
where, the lower the value, the better the approximation for \method.
% % %***************************Evaluation Measures baselines for Comparison***************************
\subsection{Baselines for Comparison}
\label{sec:BasComControl}
Here we briefly present the state-of-the-art baselines we used for comparison. Note that for each baseline we use the {\em reported parameters} that yielded the best performance in the respective publications. For fairness, we compare against the parameter configuration for \method that yielded the best performance in terms of low wall-clock timing, low relative error and fitness. However, moving one step further, we evaluate \getrank and demonstrate that it qualitatively improve the performance for \method. We also show that using \getrank in \method we can still achieve better average running time when datasets are very large.  Note that all comparisons were carried out over 10 iterations each, and each number reported is an average with a standard deviation attached to it. \\
\noindent{\bf CP\_ALS \cite{bader2015matlab}}: is considered the most standard and well optimized implementation of the Alternating Least Squares algorithm for CP. We use the implementation of the Tensor Toolbox for Matlab \cite{bader2015matlab}. Here, we simply re-compute CP using CP\_ALS every time a new batch update arrives.

\noindent{\bf SDT \cite{nion2009adaptive}}: Simultaneous Diagonalization Tracking (SDT is based on incrementally tracking the Singular Value Decomposition (SVD) of the unfolded tensor $\tensor{X}_{(3)}=U\sum V^T$ and obtain $C=UW^{-1}$ and $\mathbf{D}=\mathbf{V}\sum W^T$. Matrix $\mathbf{A}$ (resultant left singular vector) and $\mathbf{B}$ (resultant right singular vector) are then estimated by applying SVD on each column $\hat{e_i}$ of $\mathbf{D}$. \\
\noindent{\bf RLST \cite{nion2009adaptive}}:Recursive Least Squares Tracking (RLST) is another online approach in which recursive updates are computed to minimize the Mean Squared Error (MSE) on incoming  slice. In RLST, $\mathbf{C}_{new}$ is computed as $\tensor{X}_{new}D_{old}^T$ and $\mathbf{C}$ is updated as  $ \left(\begin{smallmatrix} \mathbf{C}_{old}\\  \mathbf{C}_{new}\end{smallmatrix}\right)$. Then $\mathbf{D}$ is estimated using matrix inversion on $\tensor{X}_{new}$ and $C_{new}$. \\
\noindent{\bf OnilneCP \cite{zhou2016accelerating}}: The most recent andrelated work to ours was proposed by Zhou, \textit{el at.} \cite{zhou2016accelerating} is an OnlineCP decomposition method, where the the latent factors are updated when there are new data. OnilneCP fixes $\mathbf{A}$ and $\mathbf{B}$ to solve for $\mathbf{C}$ and minimizes the cost as:
\[\mathbf{C}\leftarrow \argmin_c \frac{1}{2} ||\left(\begin{smallmatrix} \tensor{X}_{old(3)}\\  \tensor{X}_{new(3)}\end{smallmatrix}\right)-\left(\begin{smallmatrix} \mathbf{C}_{old}\\  \mathbf{C}_{new}\end{smallmatrix}\right)(\mathbf{B} \odot \mathbf{A})^T||^2
\] 
\hide{
Then $\mathbf{C}$ is updated as :\\
\[
\mathbf{C}=\left(\begin{smallmatrix} \mathbf{C}_{old}\\  \mathbf{C}_{new}\end{smallmatrix}\right)=\left[\left(\begin{smallmatrix} \mathbf{C}_{old}\\  \tensor{X}_{new(3)}\end{smallmatrix}\right)-((\mathbf{B} \odot \mathbf{A})^T)^\ddagger\right]
\]
Matrix $\mathbf{A}$ is updated as $\mathbf{A}=\mathbf{P}\mathbf{Q}^{-1}$ where $\mathbf{P}=\mathbf{P}_{old}+\tensor{X}_{new(1)}(\mathbf{C}_{new} \odot \mathbf{B})$ and Q is $(\mathbf{C}^T\mathbf{C} \otimes \mathbf{B}^T\mathbf{B})$. Matrix $\mathbf{B}$ is estimated as $\mathbf{B}=\mathbf{U}\mathbf{V}^{-1}$ where $\mathbf{U}=\tensor{X}_{(2)}(\mathbf{C}\odot \mathbf{A})$ and $\mathbf{V}=(\mathbf{C} \odot \mathbf{A})^T (\mathbf{C} \odot \mathbf{A})$.\\
}

% % %***************************Evaluation Measures Experimental Setup***************************
%\noindent{\bf {Tuning \method}}

We conduct our experiments on multiple synthetic datasets and six real-world tensors datasets. We set the tolerance rate for convergence between consecutive iterations to $10^{-5}$ and the maximum number of iteration to 1000 for all the algorithms.The batch size and sampling factor is selected based on dimensions of first mode i.e. I, provided in table \ref{table:tsyndataset} and \ref{table:tdataset} for synthetic and real dataset respectively. 
We use the publicly available implementations for the baselines, as provided by the authors. We only modified the interface of the baselines, so that it is consistent across all methods with respect to the way that they receive the incoming slices. No other functionality has been changed.
% % %***************************Experimental Results***************************
\subsection{Experimental Results}
\label{sec:Experiments}
In this section, we experimentally evaluate the performance of the proposed method \method. The following major four aspects are analyzed.\\
\textbf{Q1:} How effective is \method as compared to the baselines on different synthetic and real world datasets.\\
\textbf{Q2:} How fast is \method when compared to the state-of-the-art methods on very large sized datasets?\\
\textbf{Q3:} What is the cost-benefit trade-off of computing the actual rank of the incoming batch?\\
\textbf{Q4:} What is the influence of sampling factor $s$ and number of sampling repetitions $r$ on \method?\\
\subsubsection{Baselines for Comparison}
\label{sec:baslineComp}
For all datasets we compute Relative Error,CPU time (sec) and Fitness. For \method, CP\_{ALS} , OnlineCP, RSLT and SDT we use 10\% of the data in each dataset as existing dataset. We experimented for both dense as well as sparse tensor to check the performance of our method. The results for the dense and sparse synthetic data are shown in Table \ref{table:denseRE} - \ref{table:sparseRE}. For each of datasets , the best result is shown in bold. OnlineCP, SDT and RLST address the issue very well. Compared
with CP\_{ALS}, SDT and RLST reduce the mean running time by up to 2x times and OnlineCP reduce mean time by up to 3x times for small dataset (I up to 3000). Performance of RLST was better than SDT algorithm on 8 out of 8 third-order synthetic tensor datasets. In fact, the efficiency (in terms of CPU time (sec)) of SDT is quite close to RLST. However, the main issue of SDT and RLST is their estimation of relative error and fitness.  For some datasets, such as $I=100$ and $I=3000$, they perform well, while for some others, they exhibit  poor fitness and relative error, achieving only nearly half of the fitness of other methods. For {\em small size} datasets , OnlineCP's efficiency and accuracy is better than all methods. As the dimension grows, however, the performance of OnlineCP method reduces,and particularly for datasets of dimension larger than  $5000 \times 5000 \times 5000$. Same behavior is observed for sparse tensors. \method is comparable to baselines for small dataset and outperformed the  baselines for large dataset. CP\_{ALS} is the only baseline able to execute dataset up to size $3000 \times 3000 \times 3000$. These results answer {\bf Q1} as the \method have comparable accuracy to other baseline methods.

 %***************************dense data Relative Error***************************
\begin{table}[h!]
\ssmall
\begin{center}
\captionsetup[Table]{font=tiny,labelfont=tiny}
\begin{tabular}{ |c||c|c|c|c|c|}
\hline
I=J=K&$CP_{ALS}$ &OnlineCP&SDT&RLST&\method  \\ 
\hline
100&0.109 $\pm$ 0.01&\textbf{0.107$\pm$ 0.02}&	0.173$\pm$ 0.02&	0.151$\pm$ 0.02&0.115$\pm$ 0.02	\\
500&	\textbf{0.102 $\pm$ 0.09}&	\textbf{0.102$\pm$ 0.09}&	0.217$\pm$ 0.06	&0.217$\pm$ 0.06&\textbf{0.102$\pm$ 0.09}\\
1000&0.103$\pm$ 0.01&	0.103$\pm$ 0.01&	0.287$\pm$ 0.01	&0.296$\pm$ 0.01&\textbf{0.102$\pm$ 0.01}\\
3000&0.119$\pm$ 0.01&	\textbf{0.108$\pm$ 0.01}&0.189$\pm$ 0.01	&0.206$\pm$ 0.01&0.109$\pm$ 0.01\\
5000	&N/A&	0.122$\pm$ 0.002&0.201$\pm$ 0.002	&0.196$\pm$ 0.04&\textbf{0.115$\pm$ 0.009}\\
10000&N/A&	0.173$\pm$ 0.04&0.225$\pm$ 0.04	&0.252$\pm$ 0.06&\textbf{0.162$\pm$ 0.01}	\\
50000	&N/A&0.215$\pm$ 0.03&0.229$\pm$ 0.03&0.26$\pm$ 0.01&\textbf{0.169$\pm$ 0.01}\\
100000&N/A&N/A&	N/A&	N/A&\textbf{0.275 $\pm$ 0.00}\\
\hline
\end{tabular}
\bigskip
\caption{\bf{Experimental results for relative error for synthetic dense tensor. We see that \method gives comparable accuracy to baseline.}}
\label{table:denseRE}
\end{center}
\vspace{-0.3in}
\end{table}

% %***************************dense data CPU Time***************************
\begin{figure}[!ht]
	\begin{center}
		\includegraphics[clip, trim=3.5cm 8cm 4cm 8cm, width  = 0.24\textwidth]{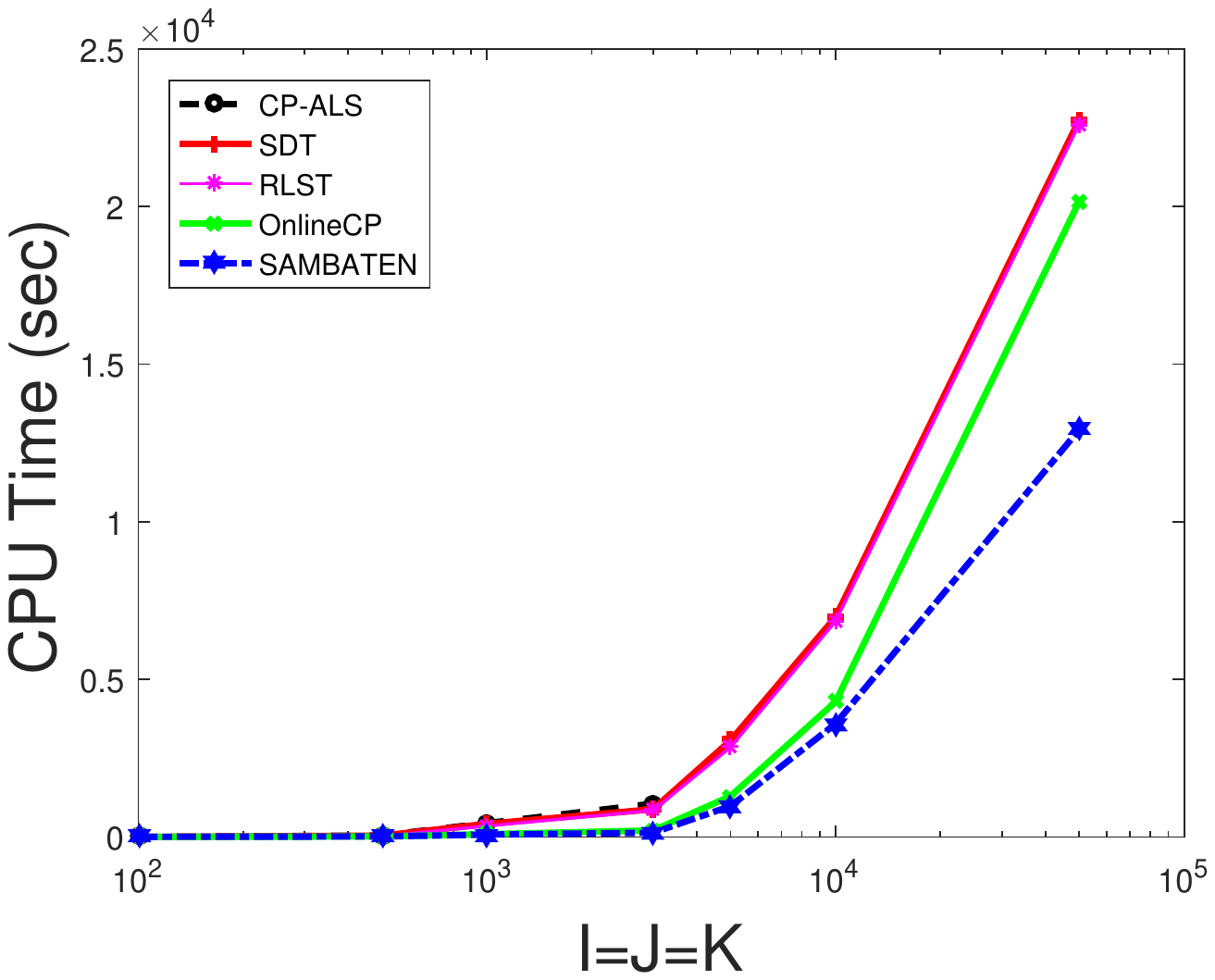}
		\includegraphics[clip, trim=3.5cm 8cm 4cm 8cm, width  = 0.24\textwidth]{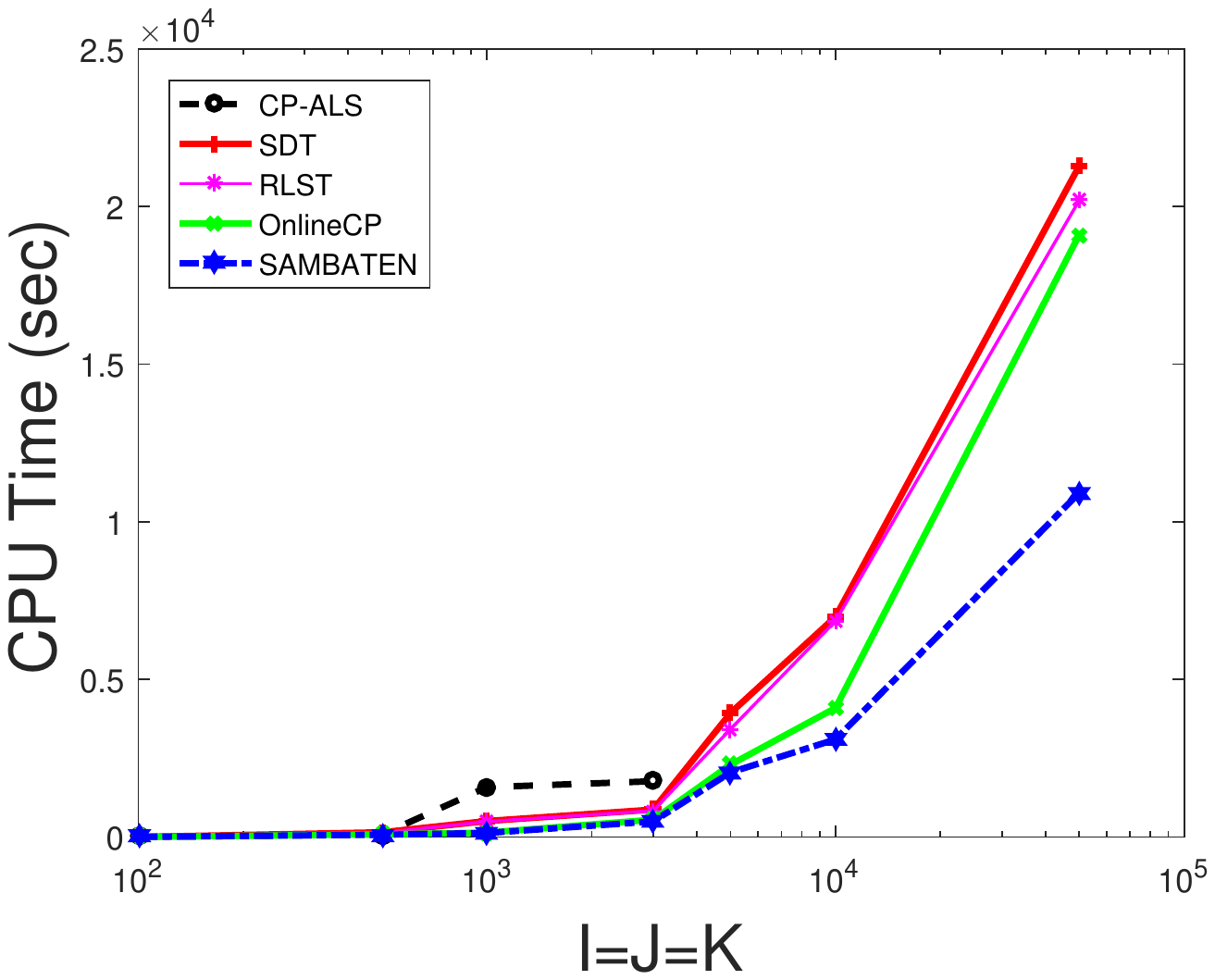}
		\caption{ \bf{Experimental results for CPU time (sec) for (a) dense tensor (b) sparse tensor}}
		\label{fig:denseCPU}
	\end{center}
	\vspace{-0.1in}
\end{figure}

% %***************************Fitness Improvement***************************
\begin{figure}[!ht]
	\begin{center}
		\includegraphics[clip, trim=0.2cm 0.8cm 0.4cm 0.4cm,  width  = 0.24\textwidth]{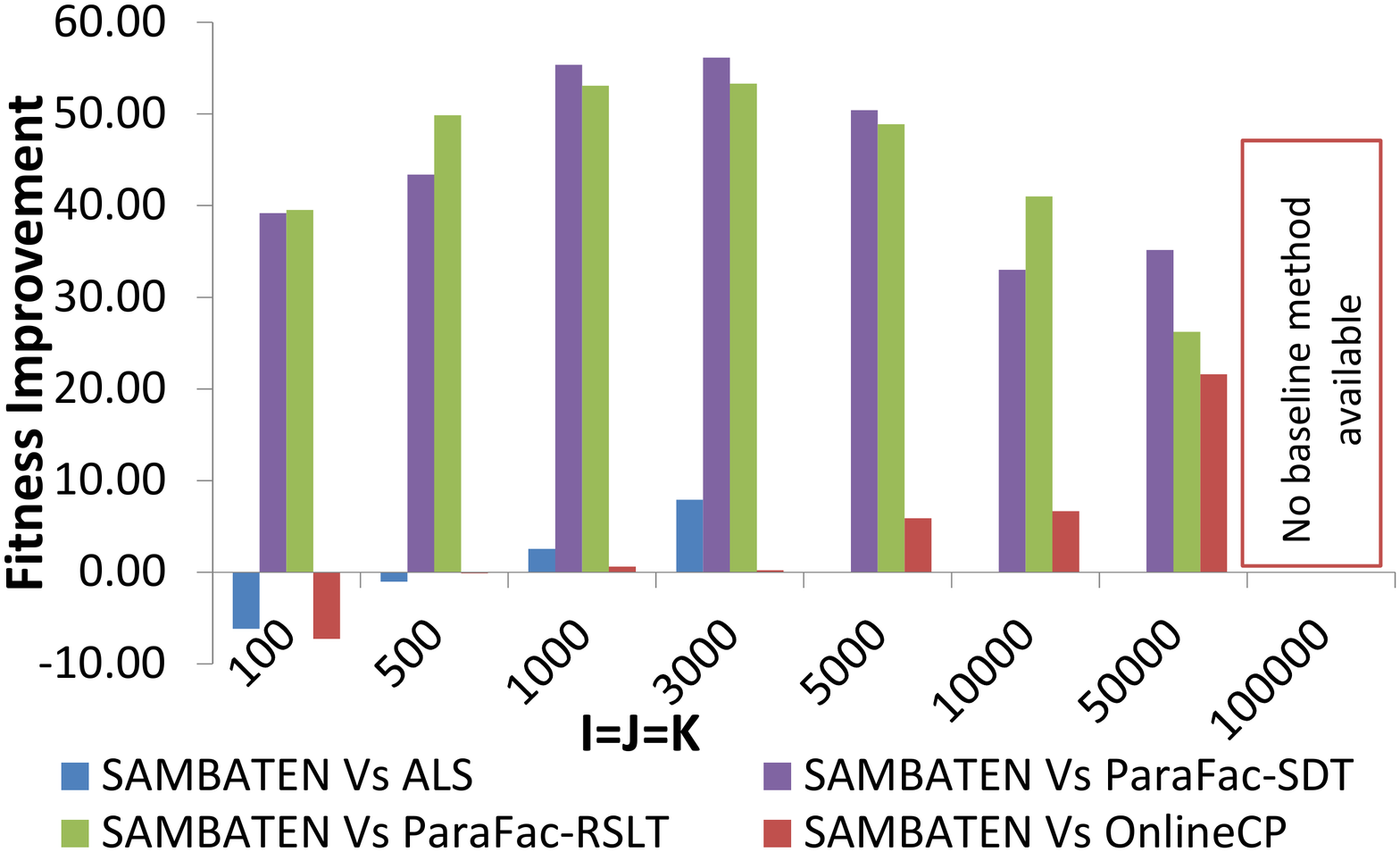}
		\includegraphics[ clip, trim=0.3cm 0.8cm 0.2cm 0.2cm, width  = 0.24\textwidth]{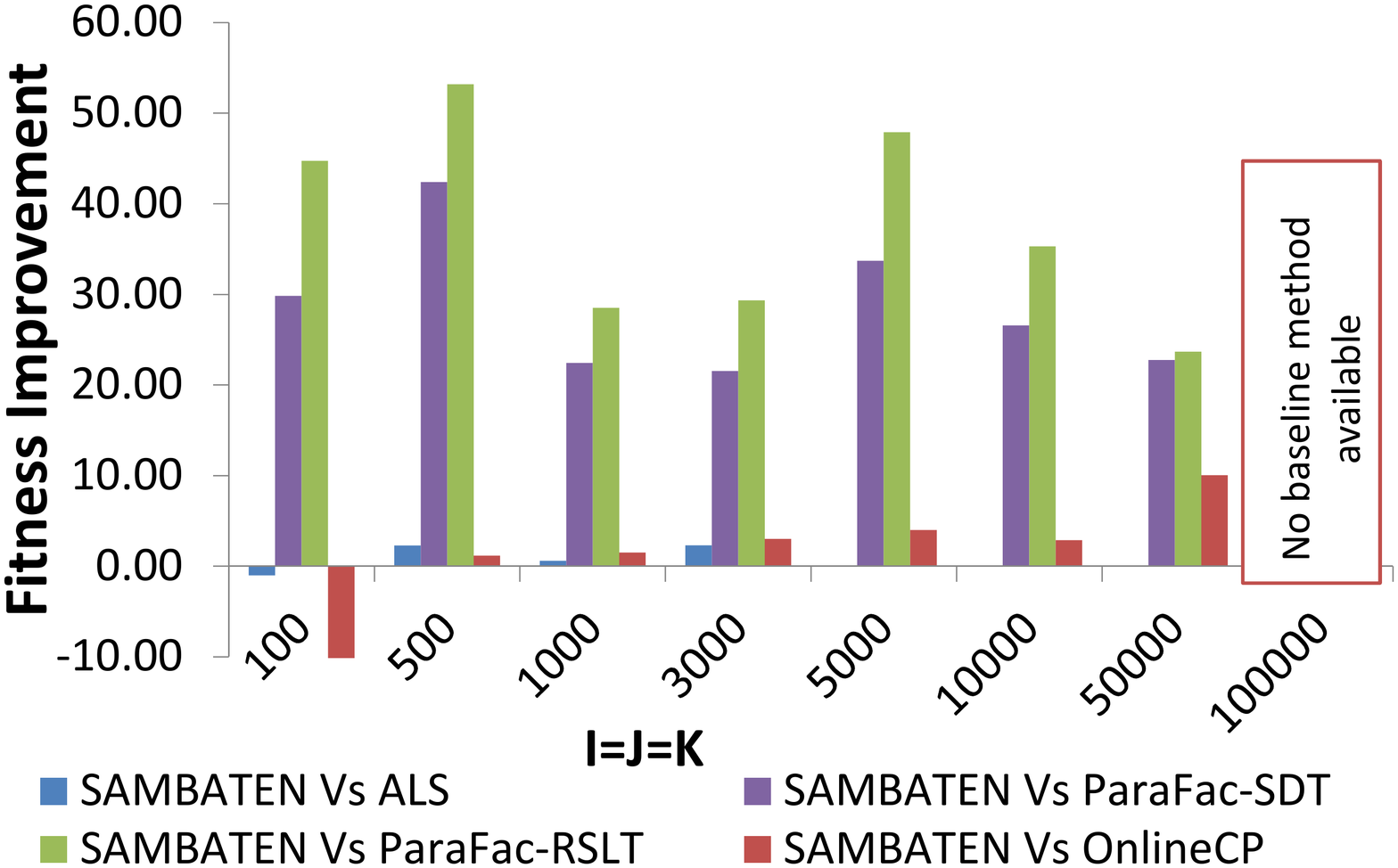}
		\caption{\bf{Experimental results for Relative Fitness Improvement for (a) dense tensor (b) sparse tensor}}
		\label{fig:spdenFitness}
	\end{center}
	\vspace{-0.1in}
\end{figure}

% % %***************************sparse data Relative Error***************************
\begin{table}[ht!]
\ssmall
\captionsetup[Table]{font=tiny,labelfont=tiny}
\begin{center}
\begin{tabular}{ |c||c|c|c|c|c|}

\hline
I=J=K&$CP_{ALS}$ &OnlineCP&SDT&RSLT&\method  \\ 
\hline
100&0.169$\pm$ 0.01&\textbf{0.154$\pm$ 0.02}&0.306$\pm$ 0.01&0.313$\pm$ 0.01&0.178$\pm$ 0.01\\
500&\textbf{0.175$\pm$ 0.01}&0.188$\pm$ 0.01&0.43$\pm$ 0.01&0.421$\pm$ 0.01&0.184$\pm$ 0.01\\
1000&0.179$\pm$ 0.01&0.185$\pm$ 0.01&0.613$\pm$ 0.01&0.813$\pm$ 0.01&\textbf{0.178$\pm$ 0.01}\\
3000&0.177$\pm$ 0.02&\textbf{0.171$\pm$ 0.04}&0.446$\pm$ 0.03&0.513$\pm$ 0.02&0.176$\pm$ 0.03\\
5000&N/A&0.192$\pm$ 0.02&0.494$\pm$ 0.09&0.535$\pm$ 0.13&\textbf{0.187$\pm$ 0.04}\\
10000&N/A&\textbf{0.173$\pm$ 0.01}&0.212$\pm$ 0.01&0.224$\pm$ 0.11&0.198$\pm$ 0.12\\
50000&N/A&0.222$\pm$ 0.00&0.262$\pm$ 0.00&0.259$\pm$ 0.00&\textbf{0.200$\pm$ 0.00}\\
100000&N/A&N/A&	N/A&	N/A&\textbf{0.283$\pm$ 0.00}\\
\hline
\end{tabular}
\bigskip
\caption{\bf{Experimental results for relative error for synthetic sparse tensor. We see that \method works better in very large scale dataset such as 50000 $\times$ 50000 $\times$ 50000.}}
\label{table:sparseRE}
\end{center}
\vspace{-0.1in}
\end{table}
\method is efficiently able to compute $100K \times 100K \times 100K$ sized tensor with batch size of 5 and sampling factor 2. It took 58095.72s and 47232.2s to compute online decomposition for dense and sparse tensor, respectively. On other hand, state-of-art methods are unable to handle such large scaled incoming data.

% % %***************************Real Dataset***************************
The most interesting comparison, however, is on the real datasets, since they present more challenging cases than the synthetic ones. Table \ref{table:realCPU} shows the comparison between methods. \method outperforms other state-of-the-art approaches in most of  the real dataset. In the case of NIPS datset, \method gives better results compared to the baselines, specifically in terms of CPU Time ({\em faster up to 20 times}) and Fitness ({\em better up to 15-20\%}). \method outperforms for NELL, Facebook-Wall and Facebook-Links dataset in terms of efficiency comparable to CP\_{ALS}. For the NIPS dataset, \method is {\em 25-30 times faster} than OnlineCP method. Due to high dimensions of dataset, RSLT and SDT are unable to execute further. Note that all the real datasets we use are highly sparse, however, no baselines except CP\_ALS actually take advantage of that sparsity, therefore, repeated CP\_ALS tends to be faster because the baselines have to deal with dense computations which tend to be slower, when the data contain a lot of zeros.  Most importantly, however, \method performed very well on Amazon and Patent dataset, arguably the hardest of the six real datasets we examined and have been analyzed in the literature, {\em where none of the baselines was able to run}. These result answer {\bf Q1} and {\bf Q2} and show that \method is able to handle large dimensions and sparsity.
\begin{table*}[ht!]
\small
\begin{center}
\begin{tabular}{ |c||c|c|c|c|c||c|c|c|c|}
\hline
Dataset&\multicolumn{5}{|c||}{CPU Time (sec)}&\multicolumn{4}{|c|}{Fitness \method w.r.t} \\[0.5ex]
\hline
&$CP_{ALS}$ &OnlineCP&SDT&RSLT&\method  & $CP_{ALS}$&OnlineCP&SDT&RSLT \\ [0.5ex]
\hline
\hline
NIPS&177.46&372.03&1608.23&1596.07&\textbf{43.98}&0.96&0.98&\textbf{0.78}&0.82\\[0.5ex]
NELL&8783.27&42325.22&65325.22&63485.98&\textbf{983.83}&0.95&0.81&\textbf{0.76}&0.81\\[0.5ex]
Facebook-wall&3041.98&N/A&N/A&N/A&\textbf{736.07}&0.97&N/A&N/A&N/A\\[0.5ex]
Facebook-links&2689.69&N/A&N/A&N/A&\textbf{343.32}&0.96&N/A&N/A&N/A\\[0.5ex]
Amazon&N/A&N/A&N/A&N/A&\textbf{4892.07}&N/A&N/A&N/A&N/A\\[0.5ex]
Patent  &N/A&N/A&N/A&N/A&\textbf{8068.27} &N/A&N/A&N/A&N/A\\[0.5ex]
\hline
\end{tabular}
\bigskip
\caption{\bf{\method performance for real dataset. We see that \method outperformed the baselines for all the large scaled tensors.}}
\label{table:realCPU}
\end{center}
\vspace{-0.1in}
\end{table*}

% % %***************************Real Dataset***************************
\subsubsection{Evaluation of Quality Control}
\label{sec:qualityControl}
\begin{table}
\small
\begin{center}
\begin{tabular}{ |c||c|c|c|c|c|}
\hline
I=J=K&200&400&600&800&1000\\[1ex]
\hline
\hline
 w/  \getrank&0.48&0.57&0.58&0.59&0.55\\[1ex]
w/o \getrank&0.46&0.53&0.55&0.54&0.52\\[1ex]
\hline
\end{tabular}
\bigskip
\caption{\bf{FMS score for synthetic dataset of batch size 50 with sampling factor 2 for each dimension.}}
\label{table:SelRsyn}
\end{center}

\end{table}

\begin{table}
\small
\begin{center}
\begin{tabular}{ |c||c|c|c|c|c|c|}
\hline
Dataset & Sampling Factor&2&5&10&15&20\\ 
\hline
\hline
\multirow{2}{*}{NIPS} & w/  \getrank &0.26&0.53&0.45&0.48&0.36\\[1ex]
&w/o \getrank&0.24&0.46&0.36&0.24&0.22\\[1ex]
\multirow{2}{*}{NELL} & w/  \getrank&0.48&0.37&0.48&0.43&0.26\\[1ex]
&w/o \getrank&0.25&0.16&0.38&0.37&0.24\\[1ex]
\hline
\end{tabular}
\bigskip
\caption{\bf{FMS score for NIPS and NELL dataset with batch size 500, $R=5$, and same sampling factor for each dimension.}}
\label{table:SelRreal}
\end{center}
\vspace{-0.2in}
\end{table}

Here we evaluate the performance improvement brought by \getrank, as well as investigate the additional computational overhead associated with it.
 We perform experiments as shown in Figure \ref{fig:getRank} on different dataset to examine the cost in terms of CPU time (sec) which we pay to compute new rank of incoming slice. To measure the accuracy of \getrank , we compute the Fitness Improvement and the Factor Matching Score (FMS) score. We define FMS score as follows. If $\mathbf{A}$ and $\mathbf{B}$ are single component tensors that have been normalized so that their weights are $\lambda_a$ and $\lambda_b$, then the score is defined as score = penalty * ($a_1^{T}*b_1$) * $a_2^{T}*b_2$) * ... * ($a_R^{T}*b_R$) where the penalty is defined by the $\lambda$ values such that penalty =  $1 -\frac{|\lambda_a - \lambda_b|}{max(\lambda_a , \lambda_b)}$. FMS score is measured between 0 to 1, with 1 being ``perfect'' match. More precisely , FMS score is defined as :
\begin{equation}
\small
FMS Score=100*\sum_{r=1}^R \left(1 -\frac{|\lambda_a - \lambda_b|}{max(\lambda_a , \lambda_b)}\right) \prod_{n=1}^N|a_r^{(n)T}b_r^{(n)}|
\end{equation}

\begin{figure}[!ht]
	\begin{center}
		\includegraphics[ trim=3.5cm 8cm 4cm 8cm,width  = 0.23\textwidth]{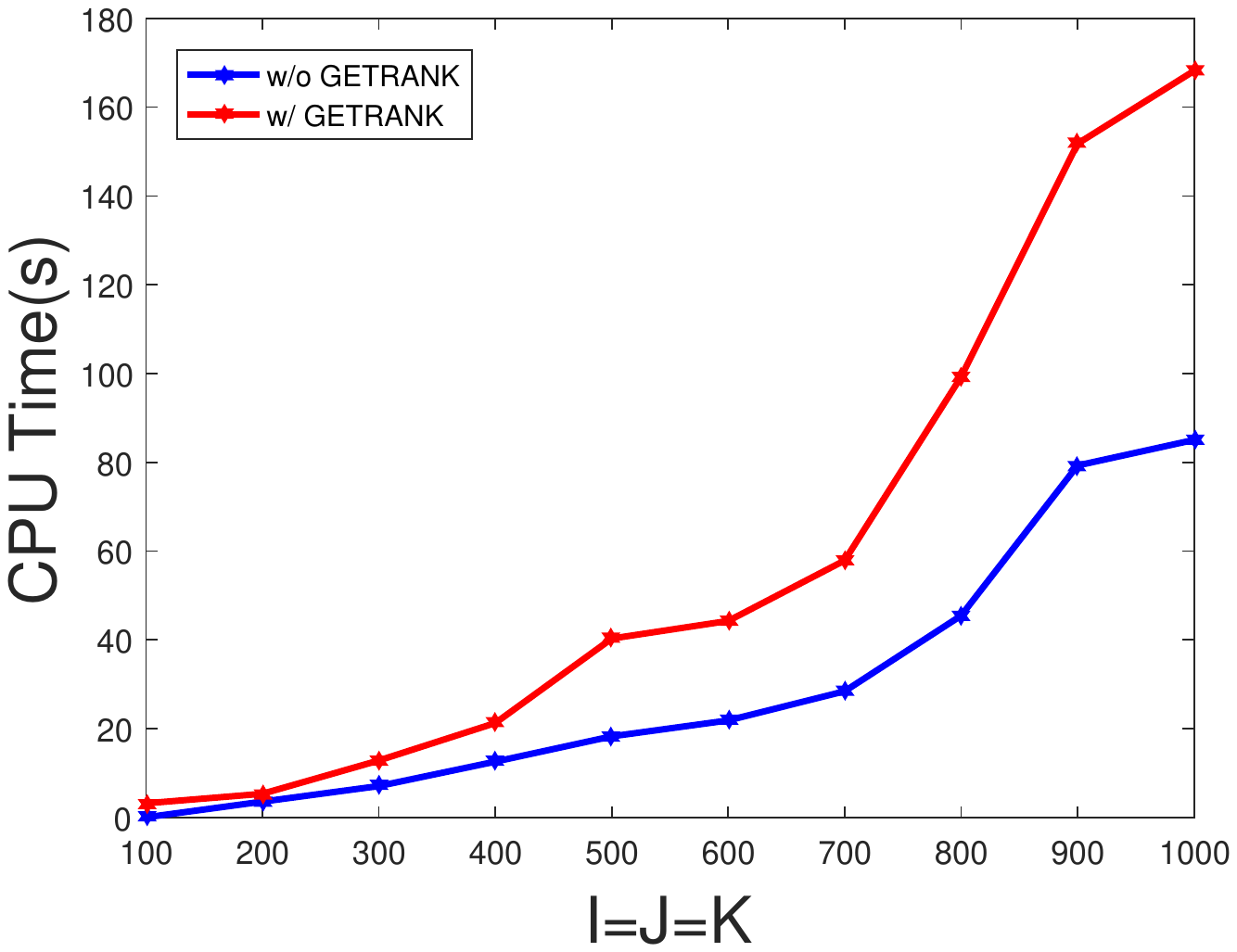}
		\includegraphics[ trim=3.5cm 8cm 4cm 8cm,width  = 0.23\textwidth]{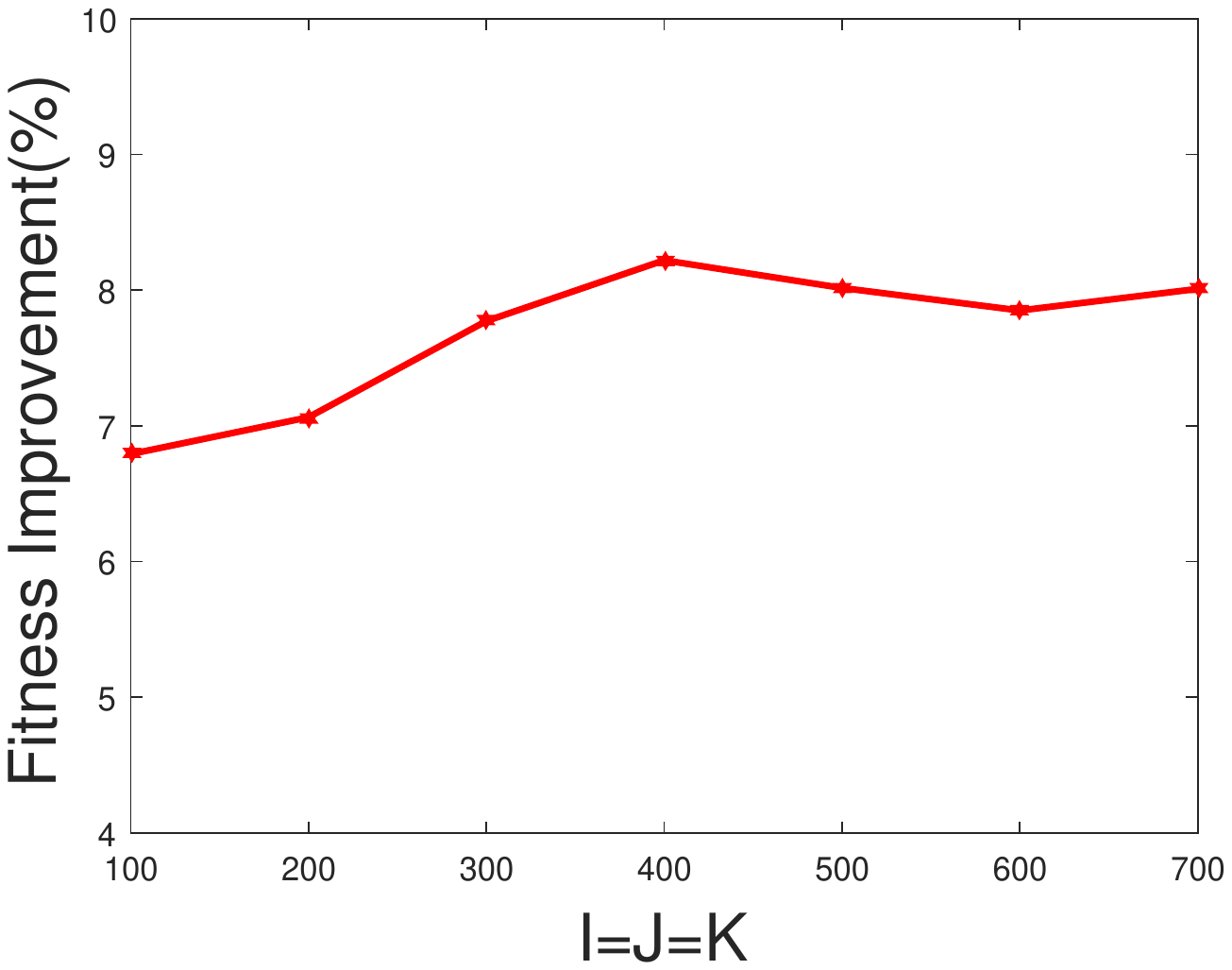}
						\caption{\bf{Experimental results for CPU Time (sec) and Relative Fitness Improvement using \getrank for synthetic dataset. Sampling factor  \textbf{s} is 2 and batch size is 50 for each synthetic dataset.}}
		\label{fig:getRank}
	\end{center}
	\vspace{-0.1in}
\end{figure}

\begin{figure}[!ht]
	\begin{center}
		\includegraphics[ trim=3.5cm 8cm 4cm 8cm,width  = 0.23\textwidth]{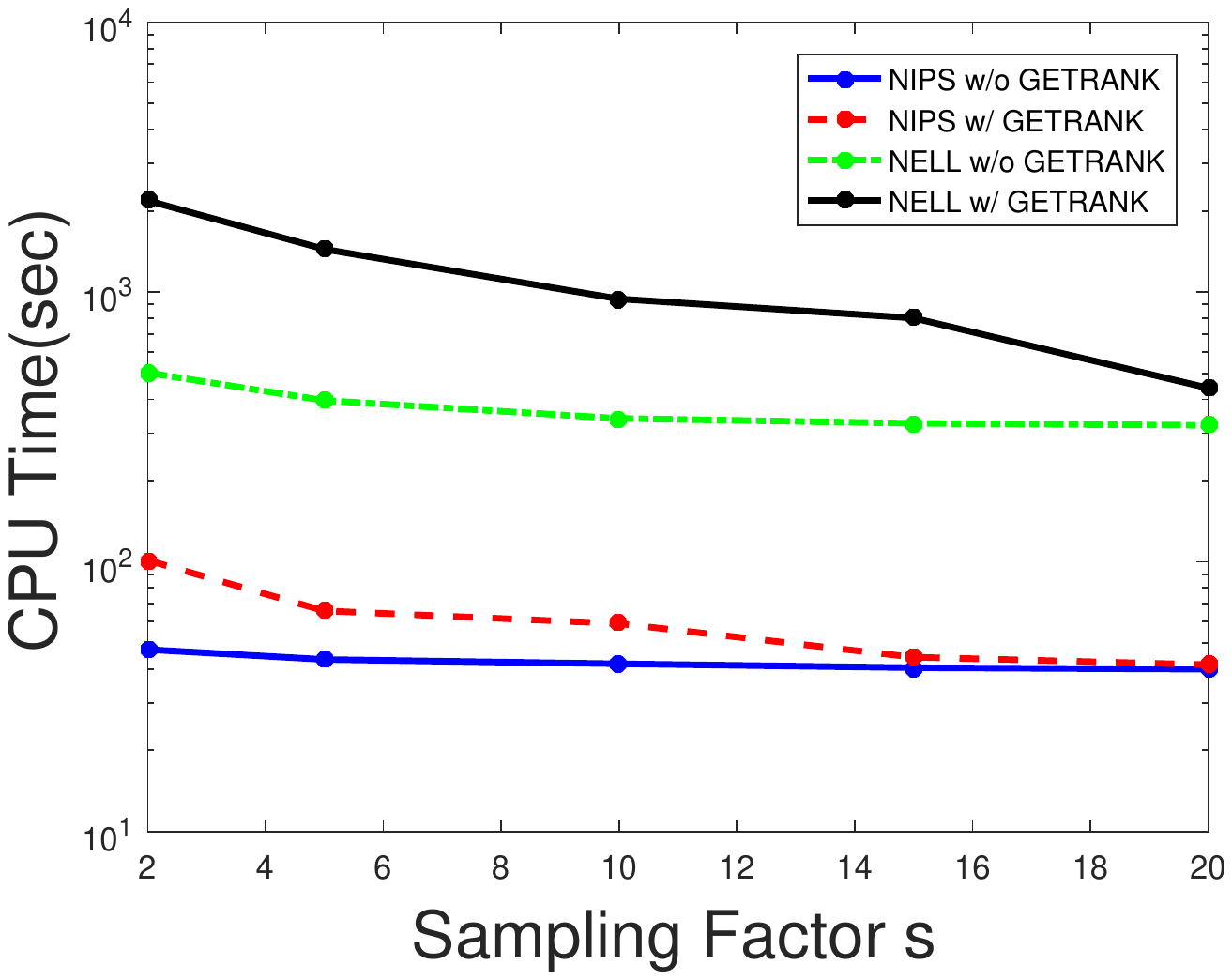}
		\includegraphics[ trim=3.5cm 8cm 4cm 8cm,width  = 0.23\textwidth]{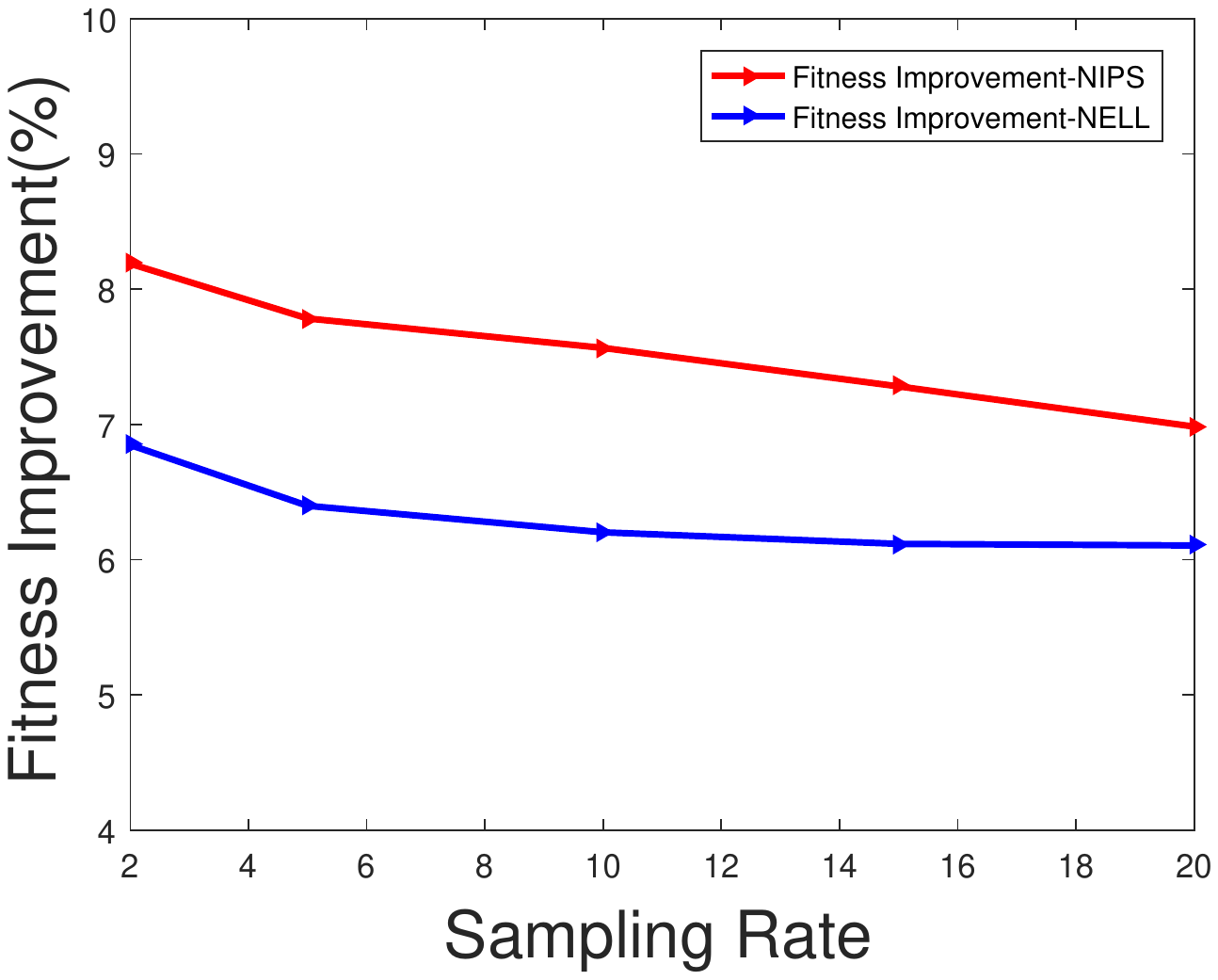}
				\caption{\bf{Experimental results for CPU Time (sec) and Relative Fitness Improvement using \getrank for NIPS and NELL dataset. Sampling factor  \textbf{s} $\in$ [2, 5, 10, 15, 20] with fixed batch size of 500.}}
		\label{fig:getRankReal}
	\end{center}
	\vspace{-0.1in}
\end{figure}

We compare the performance of \method without and with \getrank on synthetic data (where we know the actual components) and on real data, where we compute CP\_ALS on the full tensor and we set those as ground truth components. We observe that results consistently better in terms of FMS score \ref {table:SelRsyn}-\ref {table:SelRreal} and Fitness Improvement Figure \ref{fig:getRank} for synthetics and  Figure \ref{fig:getRankReal} for real NIPS dataset when we incorporate \getrank in our \method, without sacrificing run-time significantly. This answers {\bf Q3}.
\subsubsection{Tuning of Sampling Factor \textit{s}}
\label{sec:sControl}
The sampling factor plays an important role in \method. We performed experiments to evaluate the impact of changing sampling factor on \method. For these experiments , we fixed batch size to 50 for all datasets. We see in figure \ref{fig:samplingImpact} that increasing sampling factor results in reduction of CPU time (as sparsity of sub sampled tensor increased) and it reduces the fitness of output up-to 2-3\%. In sum, these observations demonstrate that: 1) a suitable sampling factor on sub-sampled tensor could improve the fitness and result in better tensor decomposition, and 2) the higher sampling factor is, the lower the CPU time. This result partially answers {\bf Q4}.

\begin{figure}[!ht]
	\begin{center}
		\includegraphics[ trim=3.5cm 8cm 4cm 8cm,width  = 0.23\textwidth]{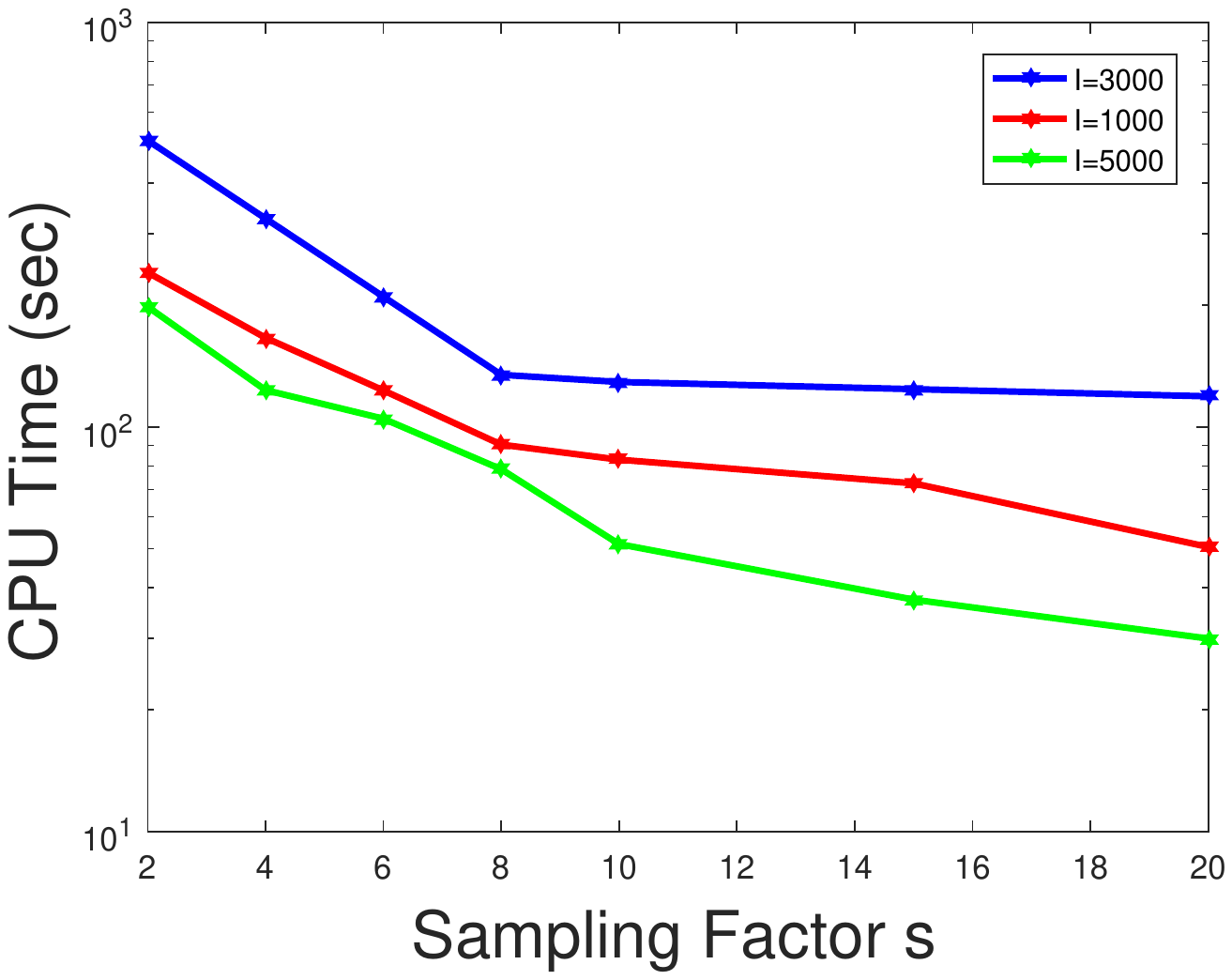}
		\includegraphics[ trim=3.5cm 8cm 4cm 8cm,width  = 0.23\textwidth]{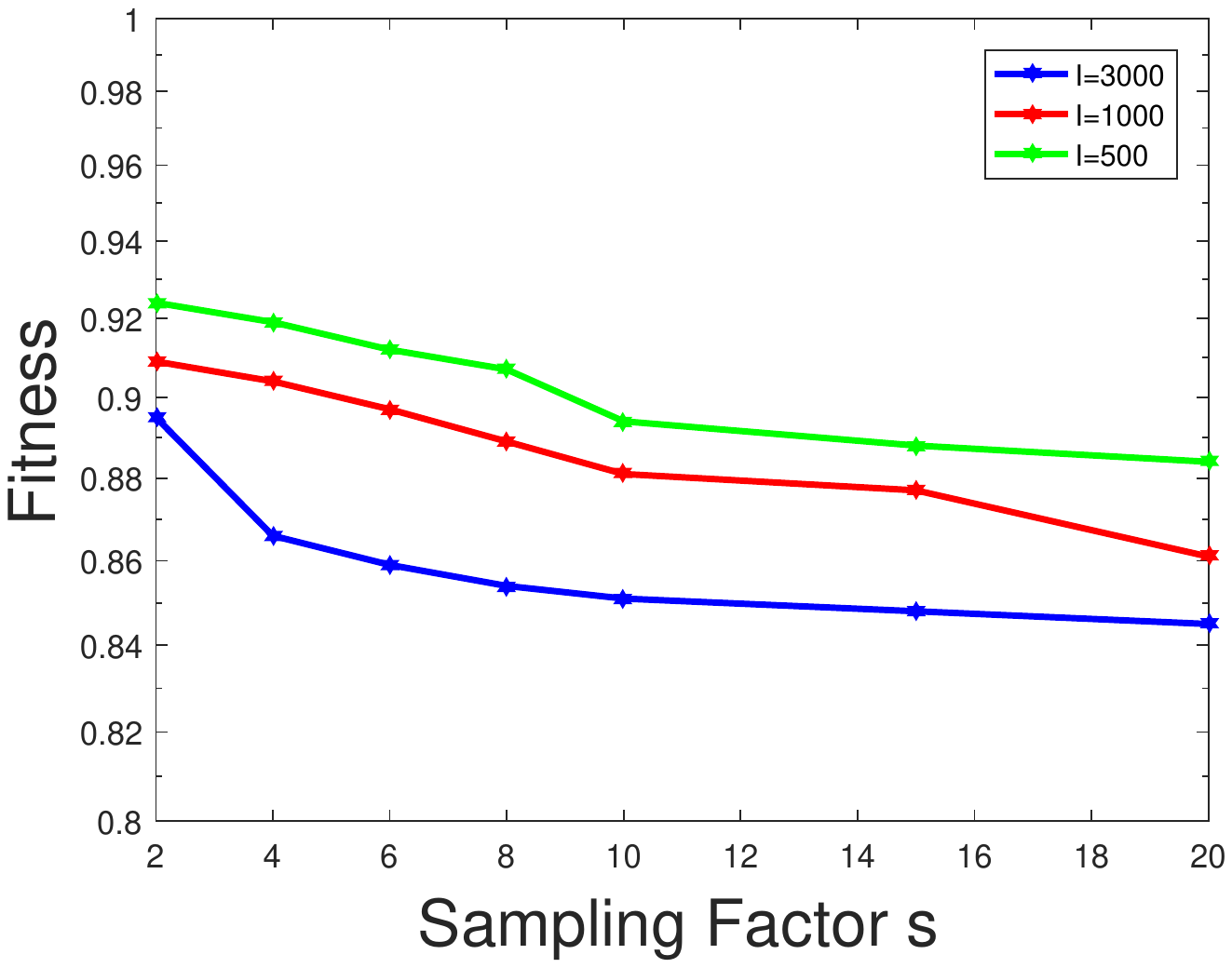}
		\caption{\bf{\method outputs sampling factor : CPU Time (sec)  and Relative Fitness on different datasets.}}
		\label{fig:samplingImpact}
	\end{center}
	\vspace{-0.1in}
\end{figure}

\begin{figure}[!ht]
	\begin{center}
		\includegraphics[ trim=3.5cm 8cm 4cm 8cm,width  = 0.23\textwidth]{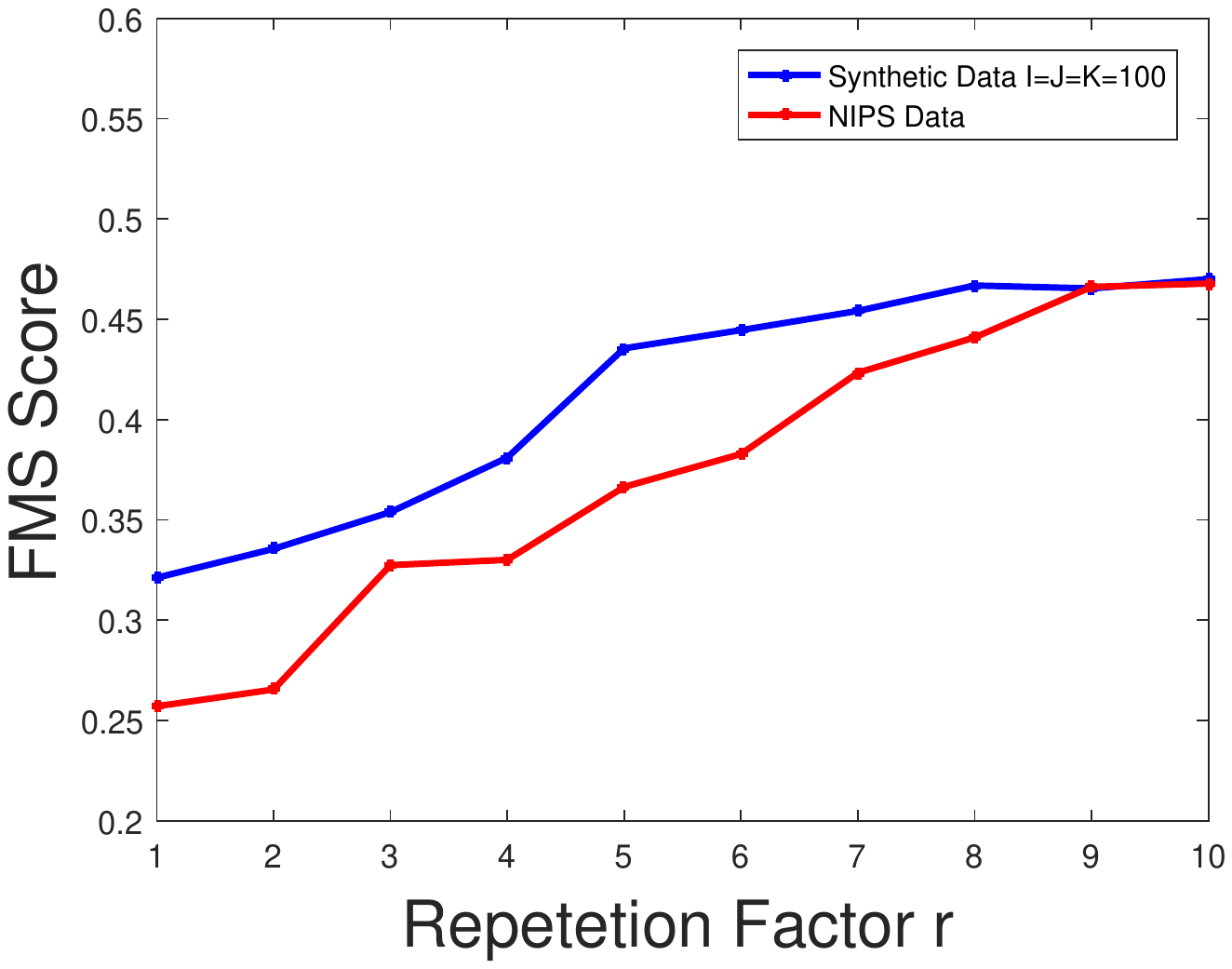}
		\includegraphics[ trim=3.5cm 8cm 4cm 8cm,width  = 0.23\textwidth]{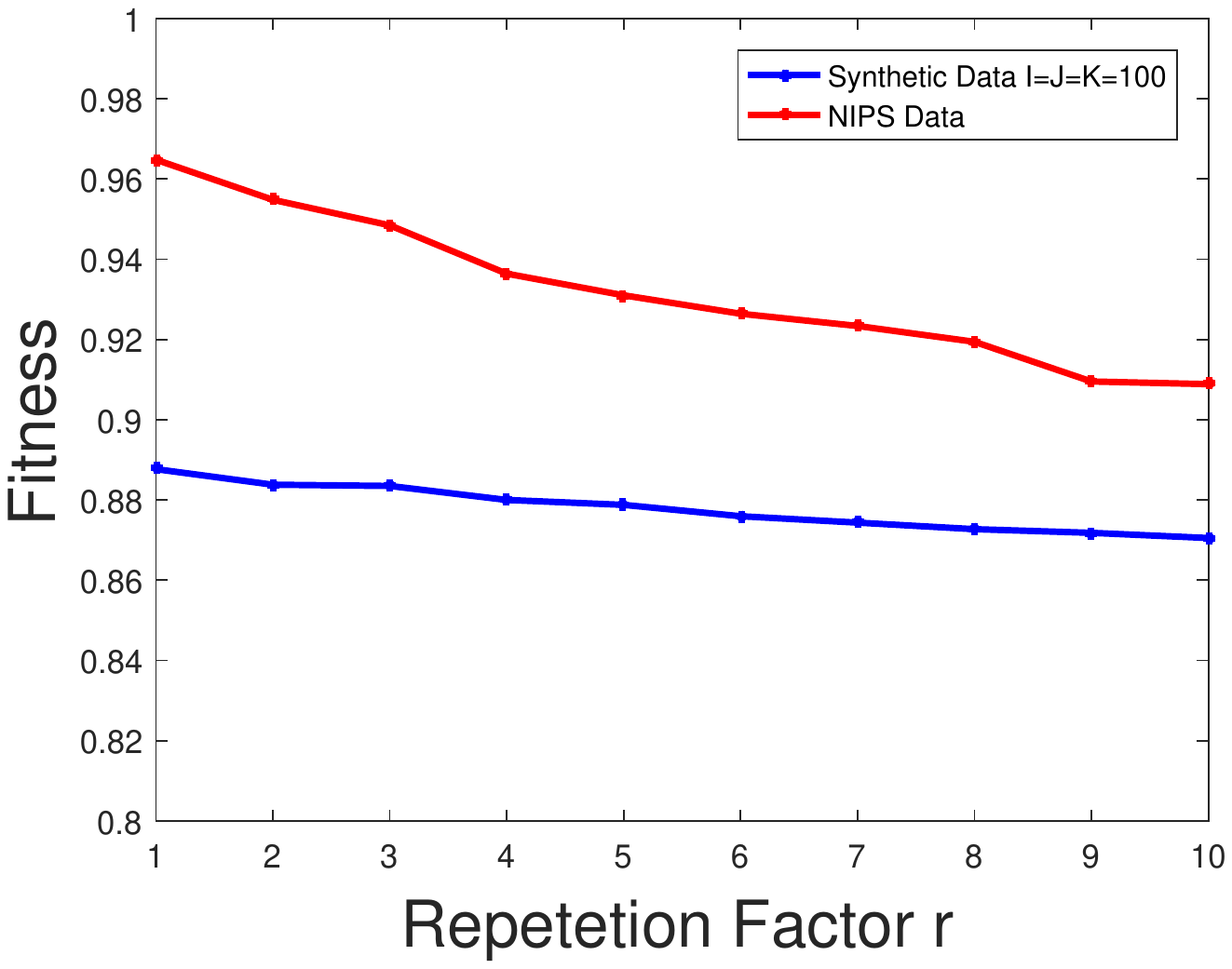}
		\caption{\bf{\method outputs repetition factor \textit{r}:  FMS score and Relative Fitness on synthetic and NIPS real world datasets.}}
		\label{fig:rImpact}
	\end{center}
	\vspace{-0.1in}
\end{figure}

\subsubsection{Influence of Repetition Factor \textit{r}}
\label{sec:rControl}
We evaluate the performance for parameter setting $r$ i.e number of paralleldecompositions.  For these experiments, we choose batch size and sampling rate for synthetic $ 500 \times 500 \times 500$ dataset and  NIPS real world dataset as provided in table \ref{table:tsyndataset} and \ref{table:tdataset}, respectively. We can see that with higher values of the repetition factor $r$, FMS score  and Relative Fitness (\method vs CP\_{ALS}) is  improved as shown in figure \ref{fig:rImpact}. We experiment on varying repetition factor \textit{r} with Sampling factor \textit{s} on NIPS real world dataset to check the performance of our method as shown in figure \ref{fig:rImpactSampling}. Note that higher FMS score indicates a better decomposition and, similarly, the lower the fitness score, the better decomposition. This result completes the answer to {\bf Q4}.

\begin{figure}[!ht]
	\begin{center}
		\includegraphics[ trim=3.5cm 8cm 4cm 8cm,width  = 0.23\textwidth]{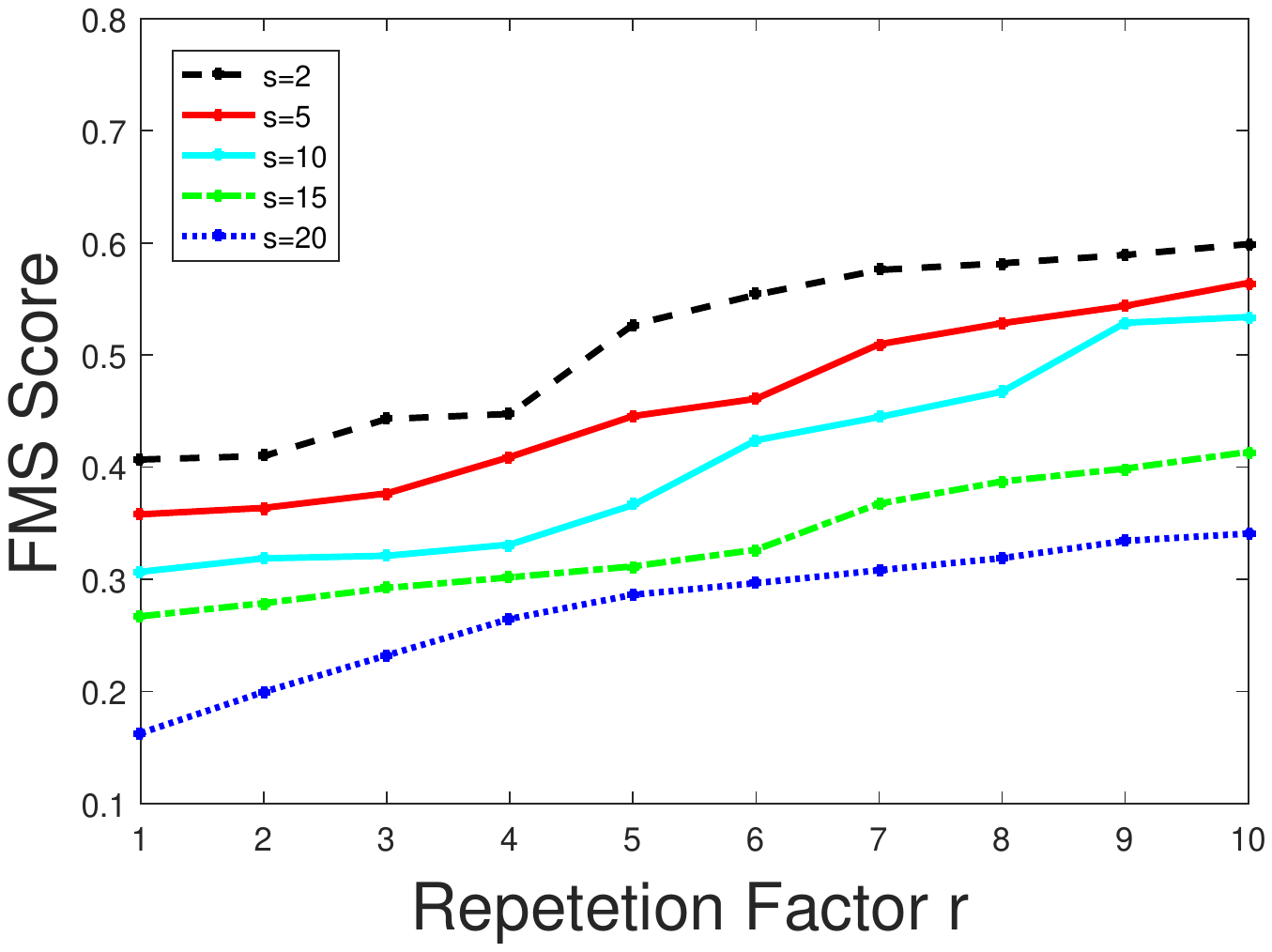}
		\includegraphics[ trim=3.5cm 8cm 4cm 8cm,width  = 0.23\textwidth]{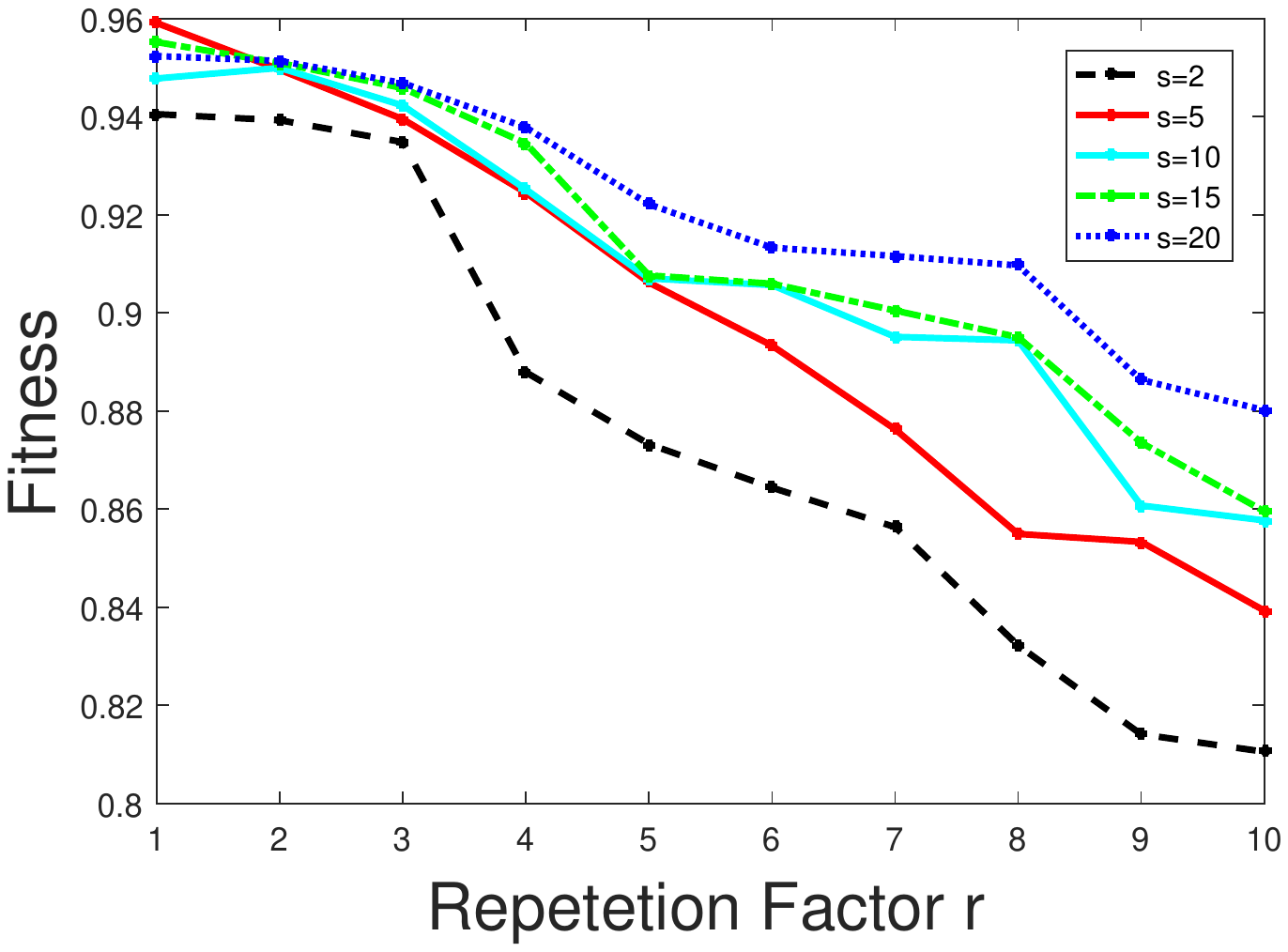}
		\caption{\bf{\method outputs for repetition factor \textit{r} and Sampling factor \textit{s}:  FMS score and Relative Fitness  for NIPS real world dataset.}}
		\label{fig:rImpactSampling}
	\end{center}
	\vspace{-0.1in}
\end{figure}
\section{Related Work}
\label{sec:related}
In this section, we provide review of the work related to our algorithm. At large,  incremental tensor methods in the literature can be categorized into three main categories:  1) Tucker
decomposition, 2) CP decomposition, 3) Tensor completion

\noindent{\bf Tucker Decomposition}:
Online tensor decomposition was first proposed by Sun \textit{el at.}\cite{SunITA} as ITA (Incremental Tensor Analysis). In there research, they described the three variants of Incremental Tensor Analysis. First,  DTA i.e. Dynamic tensor analysis which is based on calculation of co-variance of matrices in traditional higher-order singular value decomposition in an incremental fashion. Second, with help of SPIRIT algorithm, they found approximation of DTA named as Stream Tensor Analysis (STA). Third, they proposed window-based tensor analysis (WTA). To improve the efficiency of DTA, it uses a sliding window strategy. 

Liu \textit{el at.}\cite{papadimitriou2005streaming} proposed an efficient method to diagonalize the core tensor to overcome this problem. Other approaches replace SVD with incremental SVD to improve the efficiency. Hadi \textit{el at.} \cite{fanaee2015multi} proposed multi-aspect-streaming tensor analysis (MASTA) method that relaxes  constraint and allows the tensor to concurrently evolve through all modes.  

\noindent{\bf CP Decomposition}:
There is very limited study on online CP decomposition methods. Phan \textit{el at.} \cite{phan2011parafac} had developed a theoretic approach GridTF to large-scale tensors processing based on an extension to CP's fundamental mathematics theory. They used divide and concur technique to get sub-tensors and fuses the output of all factorization to achieve final factor matrices which is proved to be same as decomposing the whole tensor using CP decomposition. Its potential of concurrent computing methods to adapt the engineering applications remains unclear. Sidiropoulos \textit{el at.}\cite{nion2009adaptive}, proposed two algorithms that focus on CP decomposition namely SDT (Simultaneous Diagonalization Tracking) that incrementally perform the SVD of the unfolded tensor; and RLST (Recursive Least Squares Tracking), which recursively updates the decomposition factors by minimizing the mean squared error. The most related work to ours was proposed by Zhou, \textit{el at.} \cite{zhou2016accelerating} is an online CP decomposition method, where the the latent factors are updated when there are new data. 

\noindent{\bf Tensor Completion}:
Related to Tucker and CP decomposition are formulations which are focused on Tensor Completion, i.e., the estimation of missing values in a tensor. The main difference between completion and decomposition techniques is that in completion ``zero'' values are considered ``missing'' and are not part of the model, and furthermore, the goal is to impute those missing values accurately, rather than extracting latent factors or subspaces that can describe the existing (observed) data. To the best of our knowledge, the earliest work on incremental tensor completion traces back to  \cite{mardani2015subspace}, and very recently,  Qingquan \textit{el at.}\cite{song2017multi},  proposed a streaming tensor completion algorithm based on block partitioning of the tensor.

\hide{
\begin{figure}[!ht]
	%\vspace{-0.2in}		
	\begin{center}
		\includegraphics[width = 0.22\textwidth]{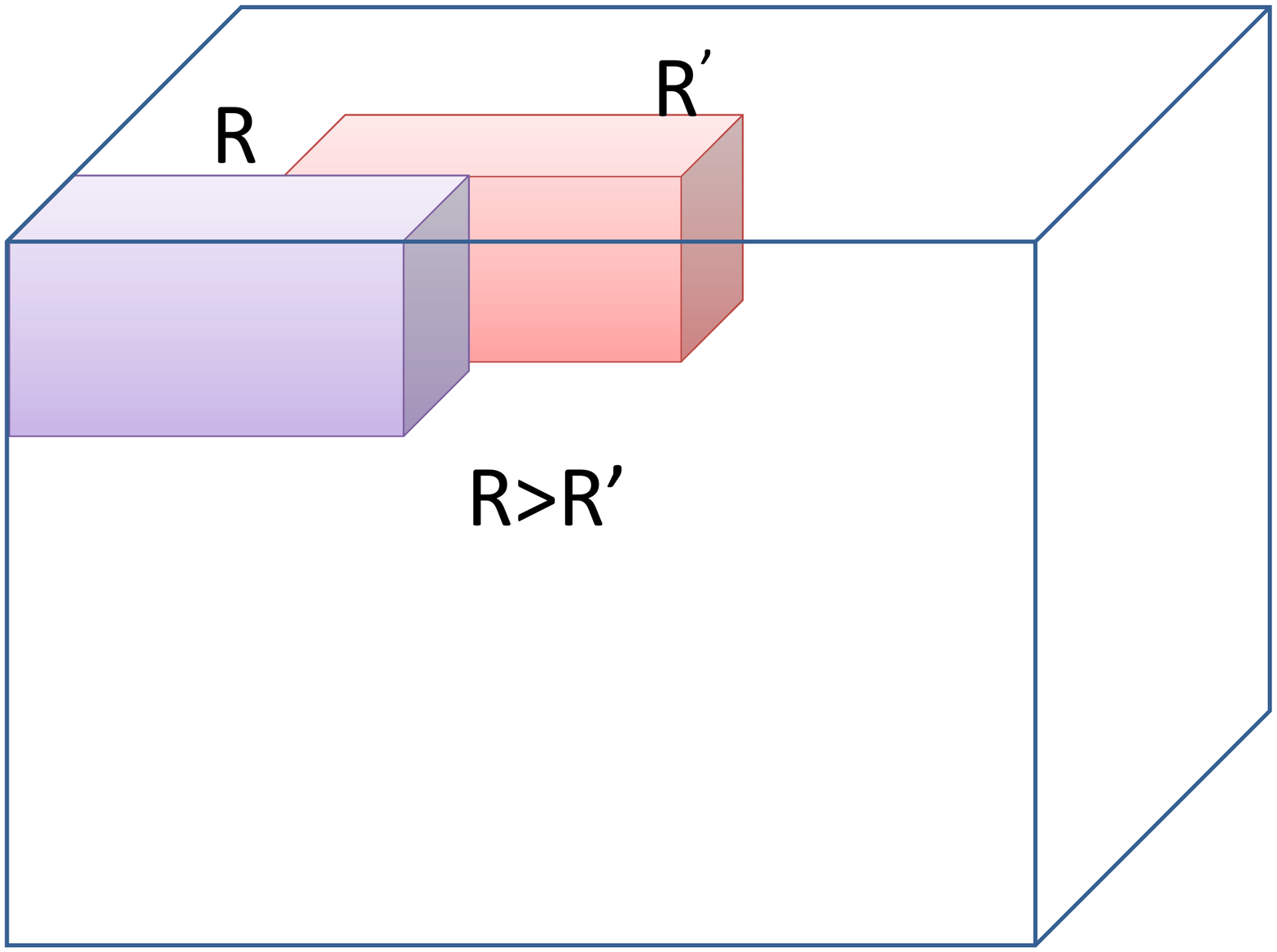}
	      \caption{diffRankSlice}
             \label{fig:diffRankSlice}
	\end{center}
%\end{wrapfigure}
\end{figure}
}
\section{Conclusions}
\label{sec:conclusions}
In this paper we introduce \method, a novel sample-based incremental CP tensor decomposition. We show its effectiveness with respect to approximation quality, with its performance being on par with state-of-the-art incremental and non-incremental algorithms, and we demonstrate its efficiency and scalability by outperforming state-of-the-art approaches ({\em 25-30 times faster}) and being able to run very large incremental tensors where none of the baselines was able to produce results. In the future we intend to explore different tensor decompositions that can also benefit from our proposed algorithmic framework.

\hide{
\section{Acknowledgments}
{\scriptsize
Research was supported by the National Science Foundation Grant No. XXXXXX. Any opinions, findings, and conclusions or recommendations expressed in this material are those of the author(s) and do not necessarily reflect the views of the funding parties.
}
}

% References should be produced using the bibtex program from suitable
% BiBTeX files (here: strings, refs, manuals). The IEEEbib.bst bibliography
% style file from IEEE produces unsorted bibliography list.
% -------------------------------------------------------------------------
\balance
\bibliographystyle{plain}
\bibliography{BIB/vagelis_refs}

\end{document}